\documentclass{article}
\usepackage{iclr2027_conference,times}

\usepackage[utf8]{inputenc}
\usepackage[T1]{fontenc}
\usepackage{amsmath,amssymb,amsfonts,amsthm,mathtools}
\usepackage{algorithm}
\usepackage{algpseudocode}
\usepackage{booktabs}
\usepackage{graphicx}
\usepackage{microtype}
\usepackage{xcolor}
\usepackage{url}
\usepackage{hyperref}
\usepackage[nameinlink,noabbrev]{cleveref}

\newtheorem{definition}{Definition}
\newtheorem{theorem}{Theorem}
\newtheorem{proposition}{Proposition}
\newtheorem{corollary}{Corollary}

\newtheorem{remark}{Remark}

\newcommand{\cS}{\mathcal{S}}
\newcommand{\cA}{\mathcal{A}}
\newcommand{\cZ}{\mathcal{Z}}
\newcommand{\cF}{\mathcal{F}}
\newcommand{\cB}{\mathcal{B}}
\newcommand{\E}{\mathbb{E}}
\newcommand{\R}{\mathbb{R}}
\newcommand{\1}{\mathbf{1}}
\newcommand{\doop}{\operatorname{do}}
\newcommand{\rank}{\operatorname{rank}}
\newcommand{\argmax}{\operatorname*{arg\,max}}
\newcommand{\CQM}{\textsc{CQM}}

\title{Counterfactual Quotient Models:\\
Learning What Actions Change, Not What the World Does}

\author{%
  Junlin Chen \quad Ruijie Wang \quad Jianxin Li \\
  School of Computer Science and Engineering, Beihang University \\
  Beijing, China \\
  \texttt{\{23371194,ruijiew,lijx\}@buaa.edu.cn}
}

\iclrfinalcopy

\begin{document}

\maketitle
\lhead{Preprint}
\begin{abstract}
Reinforcement-learning models commonly predict complete future states,
observations, or feature occupancies, even though action selection depends only
on differences between the consequences of candidate actions.  As a result,
these models may devote substantial statistical and representational capacity
to high-dimensional phenomena that evolve independently of the agent's current
choice.  We introduce the \emph{Counterfactual Quotient Model} (\CQM), which
treats action-conditioned futures as equivalent when they differ only by a
component shared across actions.  Its canonical centered representation removes
this common component while preserving every pairwise action comparison
expressible by the modeled reward family.  The implemented model learns these
action-dependent effects directly from synchronized counterfactual rollouts,
so shared stochastic dynamics cancel before function approximation rather than
after complete futures have been predicted.  We establish the decision
sufficiency, identifiability, common-mode invariance, approximation behavior,
and regret properties of the resulting representation.  Controlled experiments
in physics-based environments provide initial evidence for these properties:
direct effect learning suppresses action-independent variation, supports
previously unseen reward queries, and improves action ranking relative to
models trained to predict absolute futures.
\end{abstract}

\section{Introduction}

The conventional modeling question in reinforcement learning is
\emph{what will happen next?}  World models answer it by predicting states,
observations, rewards, or latent trajectories and using those predictions for
planning or policy learning
\citep{sutton1991dyna,ha2018worldmodels,hafner2019planet,
hafner2020dream,hafner2025dreamer}.  Successor representations and successor
features summarize expected future occupancy
\citep{dayan1993successor,barreto2017successor,barreto2018transfer}.  These
approaches are powerful, but complete future prediction is generally a
stronger requirement than action selection.

An agent choosing among actions needs to know how their consequences differ.
Predictive content shared by every candidate action cannot change the current
ranking, yet an absolute prediction objective still spends samples, decoder
capacity, and optimization effort on that shared content.  The mismatch can
be severe in rich environments: lighting, weather, background motion, other
agents, and autonomous physical processes may dominate observation variance
while remaining unaffected by the current action.  Representation-learning
methods can suppress distractors or preserve reward-relevant structure
\citep{ferns2004metrics,gelada2019deepmdp,zhang2021invariant,
wang2022denoised,kemertas2021towards}, but their abstractions usually act on
states or are tied to specified rewards.

This paper instead quotients the action-indexed future itself.  Two predictive
models are treated as decision-equivalent whenever their difference is the
same for every action.  The resulting \emph{counterfactual quotient} retains
all pairwise action contrasts and discards only an action-independent
baseline.  Its canonical representative describes which future events become
more or less likely under each intervention.  Integrating a reward against
that representative recovers the corresponding centered action value.

Our contributions are:
\begin{itemize}
  \item We formulate action-conditioned prediction as a quotient of successor
  measures and identify a canonical signed representative that preserves
  action comparisons while removing shared future content.
  \item We establish measure-level effect--value duality and minimal decision
  sufficiency, pairwise identification, common-mode invariance, exact
  paired-noise cancellation, feature-level approximation and rank bounds, and
  decision-regret guarantees.
  \item We present a practical paired-rollout objective using common random
  numbers and a low-rank \CQM{} architecture whose outputs satisfy the
  quotient constraint by construction.
  \item We evaluate the construction in DeepMind Control Suite
  environments augmented with high-dimensional common dynamics, comparing
  \CQM{} with equally sized absolute predictors and reward-aware baselines
  under a common experimental protocol.
\end{itemize}

\section{Related Work}

\paragraph{World models and predictive control.}
Model-based RL traditionally learns transition or observation models and
uses them for simulation, planning, or policy optimization, from Dyna
\citep{sutton1991dyna} to learned latent simulators.  Recurrent World Models
\citep{ha2018worldmodels}, the Predictron \citep{silver2017predictron}, latent
planning methods \citep{hafner2019planet}, model-based policy optimization
\citep{janner2019trust}, MuZero \citep{schrittwieser2020muzero}, and
Dreamer-style agents \citep{hafner2020dream,hafner2025dreamer} demonstrate the
effectiveness of compact predictive states.  Their learned object remains an
absolute future, even when represented in a latent space.  \CQM{} instead
identifies and removes the entire subspace of action-independent future
predictions.  This change is orthogonal to the choice of deterministic,
probabilistic, recurrent, or latent architecture.

\paragraph{Successor representations and transfer.}
The successor representation encodes discounted future state occupancy
\citep{dayan1993successor}; successor features generalize this idea to
features and factor a linear reward from predictive dynamics
\citep{barreto2017successor,barreto2018transfer}.  For feature reward
$r_w=w^\top\phi$,
successor features provide $Q_w=\psi^\top w$ and support transfer across
reward weights.  At its general level, \CQM{} centers the complete successor
measure and is queryable by any bounded reward; successor features arise only
after projection through $\phi$.  The practical feature \CQM{} preserves the
usual reward-linear readout but learns only the action-contrastive part of
$\psi$.  Crucially, direct quotient learning centers the target before a
capacity constraint is imposed.

\paragraph{Decision-aware and value-equivalent models.}
The value-equivalence principle argues that models should preserve Bellman
updates for specified function and policy classes rather than reconstruct the
environment \citep{grimm2020value}.  Value-aware model learning and
task-conditioned critics, including universal value function approximators
\citep{schaul2015universal}, share the goal of retaining information useful
for control.  Their equivalence relation depends on selected rewards, values,
or policies.  The measure \CQM{} is sufficient for all bounded rewards on its
event space; its finite projection is sufficient for linear rewards over the
chosen features.  In the dual view developed below, a task-conditioned value
model learns queries $w^\top\kappa$, whereas feature \CQM{} learns the primal
effect vector $\kappa$ once.

\paragraph{State abstraction and nuisance removal.}
Bisimulation-based abstractions preserve reward and transition similarity
\citep{ferns2004metrics,ferns2011bisimulation}, and the broader state
abstraction literature classifies which distinctions may be removed while
preserving control \citep{li2006unified}.  DeepMDP
\citep{gelada2019deepmdp}, invariant representations
\citep{zhang2021invariant}, and robust bisimulation metric learning
\citep{kemertas2021towards} seek compact, control-relevant state
representations.  Denoised MDPs explicitly factor controllability and reward
relevance to remove distractors \citep{wang2022denoised}.  These methods
primarily quotient or compress the \emph{state}.  Our quotient acts on
\emph{action-indexed future predictions}: two large and observably different
futures are identified whenever their difference is shared by all candidate
actions.  Moreover, no reward is needed to define action influence.

\paragraph{Counterfactual credit assignment.}
Hindsight and counterfactual methods estimate how an earlier action
contributed to a later outcome.  We use intervention notation in the standard
causal sense \citep{pearl2009causality}, while making no claim that the
environmental causal graph is learned.  Counterfactual Credit Assignment
separates action effects from external factors using future-conditioned
critics \citep{mesnard2021counterfactual}.  COCOA measures contribution
coefficients toward reward outcomes to reduce policy-gradient variance
\citep{meulemans2023cocoa}.  These methods target reward-specific credit or
gradient estimation.  Measure \CQM{} instead defines the complete
reward-independent signed future effect.  Its density ratio
\cref{eq:hindsight-ratio} reveals a shared estimation structure, while the
modeled object and intended reuse differ.

\paragraph{Invariances and reward equivalence.}
Potential-based shaping identifies reward transformations that preserve
optimal policies \citep{ng1999policy}.  This is a quotient over reward
functions.  Our construction instead quotients action-indexed successor
measures by action-independent signed-measure additions.  Both viewpoints
separate decision content from a non-identifiable or behaviorally irrelevant
``gauge,'' but they act on different mathematical objects.

\section{Counterfactual Quotient Model}

\subsection{signed successor measures}

Consider a discounted controlled process with state space $\cS$, finite action
space $\cA$, transition kernel $P$, discount $\gamma\in[0,1)$, and a fixed
continuation policy $\pi$, using standard Markov decision process notation
\citep{puterman1994markov}.  Let
$Z_t=(S_t,A_t,S_{t+1})\in\cZ$ denote a transition event.  For a finite horizon
$H$, define the discounted action-interventional successor measure
\begin{equation}
  M^\pi_{H,s,a}(B)
  =
  \E\left[
    \sum_{k=0}^{H-1}\gamma^k
    \mathbf 1\{Z_{t+k}\in B\}
    \,\middle|\,
    s_t=s,\ \doop(a_t=a),\
    a_{t+k}\sim\pi\ (k\geq 1)
  \right].
  \label{eq:successor-measure}
\end{equation}
Its total mass is
$m_H=\sum_{k=0}^{H-1}\gamma^k$, independent of $s$ and $a$.  The
infinite-horizon measure has mass $(1-\gamma)^{-1}$ when $\gamma<1$.

The event space can be specialized without changing the construction.  For a
state-only reward, one may take $Z_t=S_{t+1}$; including
$(S_t,A_t,S_{t+1})$ also covers transition- and action-dependent rewards.
Only the first action is intervened on.  Every later action is sampled from
the same fixed $\pi$, so $M^\pi_{H,s,a}$ answers the controlled question
``what discounted future follows if the first action is $a$ and behavior
thereafter follows $\pi$?''  Thus the quotient is reward-independent but
policy-dependent.

Let $\rho(\cdot\mid s)$ be a full-support reference distribution over the
first action.  Define the reference successor measure and the
\emph{counterfactual effect measure}
\begin{equation}
  \overline M^\pi_{H,s}
  =
  \sum_b\rho(b\mid s)M^\pi_{H,s,b},
  \qquad
  K^\pi_{H,s,a}
  =
  M^\pi_{H,s,a}-\overline M^\pi_{H,s}.
  \label{eq:effect-measure}
\end{equation}
Because its positive and negative parts have equal total mass,
\begin{equation}
  K^\pi_{H,s,a}(\cZ)=0,
  \qquad
  \sum_a\rho(a\mid s)K^\pi_{H,s,a}=0.
  \label{eq:measure-zero}
\end{equation}
Thus $K$ is generally not a probability measure.  It is a finite signed
measure: positive mass marks future events made more likely by action $a$,
and negative mass marks events suppressed by that action.

Let $\mathfrak M$ be the vector space of finite signed measures on $\cZ$, and
let $\mathfrak F$ contain action-indexed maps
$\mathcal M:\cS\times\cA\to\mathfrak M$.  Define the action-independent
subspace
\[
  \mathfrak B
  =
  \{\mathcal C\in\mathfrak F:
  \mathcal C_{s,a}=C_s\ \text{for every }a\}.
\]

\begin{definition}[Measure-level counterfactual quotient]
Two action-indexed successor measures $\mathcal M_1,\mathcal M_2$ are
counterfactually equivalent if
$\mathcal M_1-\mathcal M_2\in\mathfrak B$.  The counterfactual quotient is
$\mathfrak F/\mathfrak B$, and \cref{eq:effect-measure} is its
$\rho$-centered representative.
\end{definition}

The use of the signed-measure envelope $\mathfrak M$ is essential.  Successor
measures themselves are positive, but subtracting two equal-mass positive
measures produces a signed measure.  Quotienting in the ambient vector space
makes action-independent additions and subtraction well defined.  Choosing
$\rho$ fixes a gauge: different full-support references produce different
centered representatives of the same pairwise action contrasts.

For any bounded measurable reward $r:\cZ\to\R$, define
\begin{equation}
  Q^\pi_{r,H}(s,a)
  =
  \int_{\cZ}r(z)\,\mathrm dM^\pi_{H,s,a}(z).
  \label{eq:measure-value}
\end{equation}
The measure-level quotient is therefore independent of a particular reward
while remaining queryable by arbitrary bounded rewards.

\subsection{finite feature projection}

Directly parameterizing a signed measure on a rich event space is generally
impractical.  Let $\phi:\cZ\to\R^d$ be an integrable outcome feature map and
project the successor and effect measures:
\begin{align}
  F^\pi_H(s,a)
  &=
  \int_{\cZ}\phi(z)\,\mathrm dM^\pi_{H,s,a}(z),\\
  \kappa^\pi_H(s,a)
  &=
  \int_{\cZ}\phi(z)\,\mathrm dK^\pi_{H,s,a}(z)\nonumber\\
  &=
  F^\pi_H(s,a)-\sum_b\rho(b\mid s)F^\pi_H(s,b).
  \label{eq:feature-projection}
\end{align}
For every reward weight $w\in\R^d$, the feature reward
$r_w(z)=w^\top\phi(z)$ satisfies
\begin{equation}
  Q^\pi_{w,H}(s,a)=w^\top F^\pi_H(s,a),
  \qquad
  A^\pi_{w,\rho,H}(s,a)=w^\top\kappa^\pi_H(s,a).
  \label{eq:q-readout}
\end{equation}
The implemented model is this finite-dimensional projection.  In the
experiments, $\rho$ and $\pi$ are uniform.

\begin{remark}[Measure and feature levels]
The two levels answer different reward classes.  The signed measure $K$ can
be integrated against any bounded measurable reward on $\cZ$.  The vector
$\kappa\in\R^d$ retains only the $d$ moments selected by $\phi$ and is exact
for rewards in
$\{z\mapsto w^\top\phi(z):w\in\R^d\}$.  Thus the practical model is not
claimed to preserve every nonlinear reward.  \Cref{prop:reward-error} later
quantifies the loss when a reward is only approximately linear in $\phi$.
\end{remark}

\subsection{Projected quotient and gauge}

Let $\cF$ be the vector space of functions
$F:\cS\times\cA\rightarrow\R^d$ and let
\begin{equation}
  \cB
  =
  \left\{
    B\in\cF:
    B(s,a)=b(s)\ \text{for some }b:\cS\rightarrow\R^d
  \right\}
  \label{eq:gauge-subspace}
\end{equation}
be the subspace of action-independent functions.

\begin{definition}[Projected counterfactual quotient]
Two future models $F,G\in\cF$ are counterfactually equivalent, written
$F\sim G$, if $F-G\in\cB$.  Their equivalence class $[F]$ belongs to the
quotient space $\cF/\cB$.  This is the image of the measure quotient under
the linear feature projection whenever $F$ and $G$ arise from successor
measures.
\end{definition}

Every member of $[F]$ has the same pairwise action differences.  We select a
canonical representative with the centering operator
\begin{equation}
  (\mathcal C_\rho F)(s,a)
  =
  F(s,a)-\sum_{b\in\cA}\rho(b\mid s)F(s,b).
  \label{eq:center}
\end{equation}
The finite-dimensional counterfactual effect is
\begin{equation}
  \kappa^\pi_H=\mathcal C_\rho F^\pi_H,
  \qquad
  \sum_a\rho(a\mid s)\kappa^\pi_H(s,a)=0.
  \label{eq:kappa}
\end{equation}
It retains every action contrast but no action-independent baseline.

\subsection{Model parameterization}

Let $f_\theta(s,a)\in\R^d$ be an unconstrained action-conditioned network.
\CQM{} applies centering as its final layer:
\begin{equation}
  \kappa_\theta(s,a)
  =
  f_\theta(s,a)
  -
  \sum_b\rho(b\mid s)f_\theta(s,b).
  \label{eq:cqm-param}
\end{equation}
The quotient constraint therefore holds exactly for every parameter value,
not merely at convergence.

Our implementation first encodes $s$ with a two-layer multilayer perceptron.
The state code is concatenated with an action embedding and mapped to a
rank-$m$ code $z_\theta(s,a)\in\R^m$.  A shared linear decoder
$D\in\R^{d\times m}$ produces $f_\theta(s,a)=Dz_\theta(s,a)+c$.  Centering
removes $c$ and any state-dependent component shared by the raw action heads.
The explicit rank bottleneck exposes the distinction between allocating
capacity to absolute futures and to action effects.

\subsection{Paired counterfactual supervision}

Suppose a simulator, learned generative model, or resettable environment can
produce synchronized branches.  For an initial state $s_i$, sample one
sequence of exogenous random variables and one continuation-action sequence
from $\pi$.  Reuse both sequences for every candidate first action.  Let
$\widehat F_i(a)$ denote the realized discounted feature sum.  The paired
target is
\begin{equation}
  \widehat\kappa_i(a)
  =
  \widehat F_i(a)
  -
  \sum_b\rho(b\mid s_i)\widehat F_i(b).
  \label{eq:paired-target}
\end{equation}
We optimize
\begin{equation}
  \mathcal L_{\mathrm{CQM}}(\theta)
  =
  \frac{1}{N|\cA|}
  \sum_{i=1}^{N}\sum_{a\in\cA}
  \left\|
    \kappa_\theta(s_i,a)-\widehat\kappa_i(a)
  \right\|_2^2.
  \label{eq:cqm-loss}
\end{equation}
For uniform $\rho$, minimizing this centered loss is equivalent, up to a
constant factor, to matching all pairwise differences
$f_\theta(s,a)-f_\theta(s,b)$ to
$\widehat F(s,a)-\widehat F(s,b)$.  The pairwise form is useful when only a
subset of actions can be branched; \cref{thm:pairwise} gives its population
identification property.
The equivalence follows from the finite-sample identity
\begin{equation}
  \sum_{a\in\cA}\|u_a-\bar u\|_2^2
  =
  \frac{1}{2|\cA|}
  \sum_{a,b\in\cA}\|u_a-u_b\|_2^2,
  \qquad
  \bar u=\frac{1}{|\cA|}\sum_a u_a.
  \label{eq:center-pairwise-identity}
\end{equation}
Applying \cref{eq:center-pairwise-identity} to
$u_a=f_\theta(s,a)-\widehat F(s,a)$ shows that centered regression and
complete pairwise-difference regression have exactly the same minimizers.
For a new reward weight $w$, no representation update is required:
\begin{equation}
  \widehat A_{w,\rho}(s,a)
  =
  w^\top\kappa_\theta(s,a),
  \qquad
  \widehat a_w(s)=\argmax_a\widehat A_{w,\rho}(s,a).
  \label{eq:decision}
\end{equation}

\begin{algorithm}[t]
  \caption{\CQM-B: branched feature-projection training}
  \label{alg:cqm}
  \begin{algorithmic}[1]
    \Require States $\{s_i\}_{i=1}^N$, continuation policy $\pi$,
    reference $\rho$, horizon $H$, model $f_\theta$
    \For{$i=1,\ldots,N$}
      \State Sample shared process noise $\xi_{i,0:H-1}$
      and shared continuation actions $a_{i,1:H-1}\sim\pi$
      \For{$a\in\cA$}
        \State Roll out from $s_i$ with first action $a$, shared
        $\xi_{i,0:H-1}$, and shared $a_{i,1:H-1}$
        \State Store discounted outcome $\widehat F_i(a)$
      \EndFor
      \State $\widehat\kappa_i(a)\gets
      \widehat F_i(a)-\sum_b\rho(b\mid s_i)\widehat F_i(b)$ for all $a$
    \EndFor
    \State Minimize \cref{eq:cqm-loss}, with model outputs centered as in
    \cref{eq:cqm-param}
    \State For any task $w$, select actions using \cref{eq:decision}
  \end{algorithmic}
\end{algorithm}

\subsection{Ordinary-trajectory density-ratio form}

The signed-measure definition also suggests an estimator when state cloning is
unavailable.  Assume
$M^\pi_{H,s,a}\ll\overline M^\pi_{H,s}$.  Its Radon--Nikodym derivative is
\begin{equation}
  h^\pi_{H,s,a}(z)
  =
  \frac{\mathrm dK^\pi_{H,s,a}}
       {\mathrm d\overline M^\pi_{H,s}}(z)
  =
  \frac{\mathrm dM^\pi_{H,s,a}}
       {\mathrm d\overline M^\pi_{H,s}}(z)-1.
  \label{eq:density-ratio}
\end{equation}
To estimate it, sample $A\sim\rho(\cdot\mid s)$ and then sample a discounted
future event $Z$ from the normalized successor measure associated with $A$.
Bayes' rule gives
\begin{equation}
  h^\pi_{H,s,a}(z)
  =
  \frac{p(A=a\mid s,Z=z)}{\rho(a\mid s)}-1.
  \label{eq:hindsight-ratio}
\end{equation}
Thus an outcome-conditioned action classifier provides an ordinary-trajectory
estimator, denoted \CQM-H.  Reward queries become
\[
  A^\pi_{r,\rho,H}(s,a)
  =
  \int r(z)h^\pi_{H,s,a}(z)
  \,\mathrm d\overline M^\pi_{H,s}(z).
\]
\CQM-H shares an estimation technique with hindsight credit assignment and
COCOA, but its target is the complete reward-independent signed future
measure.  The present experiments evaluate only the cleaner branched
feature-projection estimator \CQM-B; \cref{eq:hindsight-ratio} is included to
show that the quotient object is not conceptually restricted to resettable
simulators.

\section{Experiments}
\label{sec:physics-benchmark}

\subsection{Benchmark environments and observations}

We replace the constructed controllable dynamics with four state-based
environments from the DeepMind Control Suite \citep{tassa2018deepmind}, whose
physics is simulated by MuJoCo \citep{todorov2012mujoco}: Cartpole Swingup,
Reacher Easy, Cheetah Run, and Walker Walk.  Unlike the Distracting Control
Suite \citep{stone2021distracting}, which adds visual background, color, and
camera variation to pixel observations, our benchmark adds a latent
state-vector process and then mixes it across all observed coordinates.  The
flattened native observation dimensions are respectively
$d_x\in\{5,6,17,24\}$.  The native observations provide real nonlinear
rigid-body dynamics, contacts, joint limits, and domain-specific geometry; no
constructed transition equation is used.

To isolate the common-mode question, each environment is augmented with an
autonomous process $c_t\in\R^{64}$:
\begin{equation}
  c_{t+1}=0.97c_t+0.35\varepsilon_t,
  \qquad \varepsilon_t\sim\mathcal N(0,I).
  \label{eq:dm-common-dynamics}
\end{equation}
Let $x_t\in\R^{d_x}$ be the flattened native task observation.  The model sees
\begin{equation}
  o_t
  =
  Q
  \begin{bmatrix}
    6c_t\\x_t
  \end{bmatrix},
  \label{eq:dm-mixed-observation}
\end{equation}
where $Q$ is a fixed random orthogonal matrix.  Thus every observed coordinate
mixes action-independent and controllable information; a model cannot remove
the nuisance by dropping a known block of coordinates.

\subsection{Action prototypes and paired branches}

DM Control actions are continuous, whereas the current \CQM{} implementation
centers a finite action set.  Cartpole uses three evenly spaced scalar controls
$\{-1,0,1\}$.  The other domains use five fixed prototypes: zero control and
two randomly sampled directions together with their antithetic negatives,
scaled to the action bounds.  These prototypes define the intervention set;
all reported ``optimal actions'' are optimal only within this finite set.

An initial physics state is obtained by resetting the environment and applying
between zero and 20 random warm-up controls.  From that state, we clone one
branch per candidate first action and roll each branch for $H=12$ steps with
$\gamma=0.95$.  The continuation actions after the first step are sampled
uniformly and shared across branches.  The complete exogenous-noise sequence
is also shared.  Consequently the branch family satisfies the coupling in
\cref{thm:cancellation,prop:paired-variance}: only the first control
intervention differs.

\subsection{Held-out reward queries}

The experiment evaluates transfer across linear reward queries, not the
environments' built-in rewards.  For each domain we sample unit vectors
$g\in\R^{d_x}$ and define
\begin{equation}
  r_g(t)=g^\top x_{t+1}.
  \label{eq:dm-linear-reward}
\end{equation}
Because $Q$ is orthogonal, $w_g=Q[0;g]$ satisfies
$r_g(t)=w_g^\top o_{t+1}$.  The discounted branch feature
\begin{equation}
  F_H(s,a)=\E\!\left[
    \sum_{k=0}^{H-1}\gamma^k o_{t+k+1}
    \,\middle|\,s_t=s,\doop(a_t=a),\pi
  \right]
  \label{eq:dm-branch-feature}
\end{equation}
therefore answers every task through $w_g^\top F_H(s,a)$.  We use 32 reward
directions to train reward-aware baselines and reserve 16 independently
sampled directions for evaluation.  \CQM{}, the world model, and successor
features receive no reward vector in their representation losses.

\begin{remark}[What the benchmark does and does not measure]
The names Swingup, Easy, Run, and Walk identify the underlying physics and
observation maps.  Held-out accuracy measures the best first action for an
unseen linear reward under the fixed uniform continuation policy.  It is
neither native-task return nor long-horizon closed-loop success.  This choice
matches the finite-feature theory in \cref{eq:q-readout} and cleanly tests
reward transfer, but it should not be read as evidence that the learned model
solves the standard DM Control objective.
\end{remark}

\subsection{Models and protocol}

We evaluate the branched finite-feature estimator \CQM-B.  The oracle quotient
uses exact paired effects and is an unattainable ceiling.  Learned comparisons
are: (i) an absolute one-step world model rolled forward with mean continuation
actions; (ii) successor features trained by TD under the uniform continuation
policy; (iii) a direct task-conditioned Monte Carlo value predictor, used as a
value-equivalence proxy; (iv) Task-TD, a universal value function approximator
with target-network policy-evaluation backups; and (v) a COCOA-style
contribution estimator with a distilled task-conditioned policy
\citep{grimm2020value,meulemans2023cocoa}.  These reward-aware comparisons
are controlled proxies chosen to expose different prediction targets; they
are not exact reproductions of every algorithmic detail in the cited work.

\CQM-B, the world model, and successor features use identical architectures
within each domain: two width-128 hidden layers, a 16-dimensional action
embedding, an eight-dimensional bottleneck, and a linear decoder to the mixed
observation dimension.  Thus differences among these three models reflect
their prediction targets rather than parameter budget.  Parameter counts vary
slightly with $d_x$ (approximately 79--82k for each vector model).

Each seed contains 20,000 training and 4,000 test initial states.  All methods
use AdamW, learning rate $3\times10^{-4}$, weight decay $10^{-5}$, batch size
256, gradient clipping at 10, and 4,000 updates.  Target networks update every
100 steps.  COCOA policy distillation uses 2,000 updates and temperature
$0.25$.  We report seeds 7--11; confidence intervals are $1.96$ standard
errors across the five runs.

\subsection{Metrics}

For each test state $s$ and reward direction $g$, let
$a^\star_g(s)=\argmax_a Q_g(s,a)$ be the best prototype according to the
paired simulator return, and let
$\widehat a_g(s)=\argmax_a\widehat Q_g(s,a)$.  Action accuracy is
\begin{equation}
  \operatorname{Acc}
  =
  \Pr_{s,g}\!\left[\widehat a_g(s)=a^\star_g(s)\right].
  \label{eq:action-accuracy}
\end{equation}
Normalized regret retains the cost of near misses:
\begin{equation}
  \operatorname{NReg}
  =
  \frac{
    \E_{s,g}\!\left[
      Q_g(s,a^\star_g)-Q_g(s,\widehat a_g)
    \right]
  }{
    \operatorname{Std}_{s,a,g}[Q_g(s,a)]
  }.
  \label{eq:normalized-regret}
\end{equation}
Accuracy is strict---a different argmax counts as an error even when two
actions have nearly equal return---so regret is needed to measure severity.
For vector predictors, effect NMSE is
\begin{equation}
  \operatorname{NMSE}_{\mathrm{eff}}
  =
  \frac{
    \E\|\widehat\kappa(s,a)-\kappa(s,a)\|_2^2
  }{
    \E\|\kappa(s,a)\|_2^2
  }.
  \label{eq:effect-nmse}
\end{equation}
For the score-alignment diagnostic, we remove each model's arbitrary
action-independent baseline and scale:
\begin{equation}
  \widetilde A_g(s,a)
  =
  \frac{
    \widehat Q_g(s,a)-|\cA|^{-1}\sum_b\widehat Q_g(s,b)
  }{
    \left[
      |\cA|^{-1}\sum_b
      \left(
        \widehat Q_g(s,b)-|\cA|^{-1}\sum_c\widehat Q_g(s,c)
      \right)^2
    \right]^{1/2}
  }.
  \label{eq:normalized-advantage}
\end{equation}
This normalization tests the geometry of relative action scores rather than
their arbitrary offset or overall scale.

\subsection{Aggregate results}

\begin{table}[t]
  \caption{Held-out action accuracy over five seeds (mean $\pm$ approximate
  95\% confidence interval).  Bold is the best learned method.  Chance is
  $1/3$ on Cartpole and $1/5$ elsewhere.}
  \label{tab:dm-accuracy}
  \centering
  \small
  \resizebox{\linewidth}{!}{%
  \begin{tabular}{lcccccc}
    \toprule
    Domain & \CQM{}-B & World & SF & Direct value & Task-TD & COCOA\\
    \midrule
    Cartpole
      & $\mathbf{0.9998\pm0.0003}$ & $0.2521\pm0.1389$
      & $0.4236\pm0.0840$ & $0.9904\pm0.0084$
      & $0.8407\pm0.0766$ & $0.6445\pm0.0791$\\
    Reacher
      & $\mathbf{0.9553\pm0.0290}$ & $0.2598\pm0.0592$
      & $0.3456\pm0.0647$ & $0.8534\pm0.0974$
      & $0.6769\pm0.0892$ & $0.6316\pm0.1331$\\
    Cheetah
      & $\mathbf{0.6411\pm0.0118}$ & $0.2693\pm0.0901$
      & $0.3713\pm0.0362$ & $0.6115\pm0.0523$
      & $0.5338\pm0.0501$ & $0.5957\pm0.0398$\\
    Walker
      & $\mathbf{0.3134\pm0.0101}$ & $0.2989\pm0.0114$
      & $0.2719\pm0.0099$ & $0.2665\pm0.0375$
      & $0.2900\pm0.0171$ & $0.2268\pm0.0357$\\
    \bottomrule
  \end{tabular}}
\end{table}

\begin{table}[t]
  \caption{Held-out normalized regret (lower is better).  The oracle has zero
  regret.  Bold is the best learned method.}
  \label{tab:dm-regret}
  \centering
  \small
  \resizebox{\linewidth}{!}{%
  \begin{tabular}{lcccccc}
    \toprule
    Domain & \CQM{}-B & World & SF & Direct value & Task-TD & COCOA\\
    \midrule
    Cartpole
      & $\mathbf{0.0000001\pm0.0000002}$ & $0.1909\pm0.0598$
      & $0.1412\pm0.0208$ & $0.00007\pm0.00007$
      & $0.0139\pm0.0086$ & $0.0850\pm0.0310$\\
    Reacher
      & $\mathbf{0.0011\pm0.0005}$ & $0.1561\pm0.0262$
      & $0.1095\pm0.0349$ & $0.0039\pm0.0027$
      & $0.0167\pm0.0082$ & $0.0192\pm0.0094$\\
    Cheetah
      & $\mathbf{0.0542\pm0.0041}$ & $0.2044\pm0.0327$
      & $0.1450\pm0.0121$ & $0.0606\pm0.0143$
      & $0.0894\pm0.0216$ & $0.0627\pm0.0078$\\
    Walker
      & $\mathbf{0.2427\pm0.0209}$ & $0.2536\pm0.0199$
      & $0.2716\pm0.0182$ & $0.2847\pm0.0332$
      & $0.2604\pm0.0206$ & $0.3125\pm0.0099$\\
    \bottomrule
  \end{tabular}}
\end{table}

\begin{table}[t]
  \caption{Effect NMSE for the three matched vector predictors.  Unlike
  absolute-future NMSE, this metric isolates action-dependent prediction
  error.}
  \label{tab:dm-effect}
  \centering
  \small
  \begin{tabular}{lccc}
    \toprule
    Domain & \CQM{}-B & World & SF\\
    \midrule
    Cartpole & $\mathbf{0.000007}$ & $1.2182$ & $1.0658$\\
    Reacher  & $\mathbf{0.0385}$   & $1.0369$ & $1.0125$\\
    Cheetah  & $\mathbf{0.4503}$   & $1.0228$ & $1.0113$\\
    Walker   & $\mathbf{1.0453}$   & $1.6460$ & $1.1585$\\
    \bottomrule
  \end{tabular}
\end{table}

\paragraph{Cartpole and Reacher show the cleanest quotient advantage.}
On Cartpole, \CQM{} is essentially exact and reduces normalized regret by
roughly six orders of magnitude relative to the absolute world model.  On
Reacher, it reaches $95.53\%$ held-out accuracy, compared with $25.98\%$ for
the world model and $34.56\%$ for successor features.  The matched
architectures rule out parameter count as the explanation.  Instead,
\cref{tab:dm-effect} shows that direct quotient supervision estimates the
action effect itself, whereas the absolute predictors attain effect NMSE near
or above one despite absolute-future NMSE near $0.92$--$0.93$.

\paragraph{Cheetah remains favorable but is materially harder.}
The native state dimension increases to 17 and the same rank-eight bottleneck
must represent more varied action effects.  \CQM{} still obtains the highest
accuracy ($64.11\%$) and lowest regret ($0.0542$), but the direct value and
COCOA baselines are close.  Its effect NMSE rises to $0.4503$, consistent with
the feature-level regret link rather than with a domain-independent guarantee
of near-perfect recovery.

\paragraph{Walker exposes the present capacity limit.}
Walker has a 24-dimensional native observation and contact-rich dynamics.
All methods perform poorly: \CQM{} reaches only $31.34\%$ accuracy, versus a
$20\%$ random-action level, and its effect NMSE exceeds one.  Although it has
the best mean accuracy and regret, the margin over the world model is small
and the confidence intervals overlap.  We therefore treat Walker as a
negative or stress-test result: quotient targeting alone does not compensate
for an underspecified shared rank-eight decoder; the pointwise
$|\cA|-1$ rank bound does not prevent the action-effect subspace from rotating
across states.

\paragraph{Reward-free effects versus reward-aware values.}
Direct value prediction is highly competitive on Cartpole, Reacher, and
Cheetah because it learns scalar decision queries directly.  This is
consistent with \cref{prop:dual-recovery}: 32 training directions can provide
broad coverage of these finite physical feature spaces.  \CQM{} nevertheless
matches or improves on these reward-aware methods while its representation
loss never observes a reward direction.  The useful claim is therefore
reusability of a primal action-effect representation, not that reward-aware
critics are incapable of representing the same decisions.

\subsection{Reacher score geometry}

\begin{figure*}[t]
  \centering
  \includegraphics[width=\textwidth]{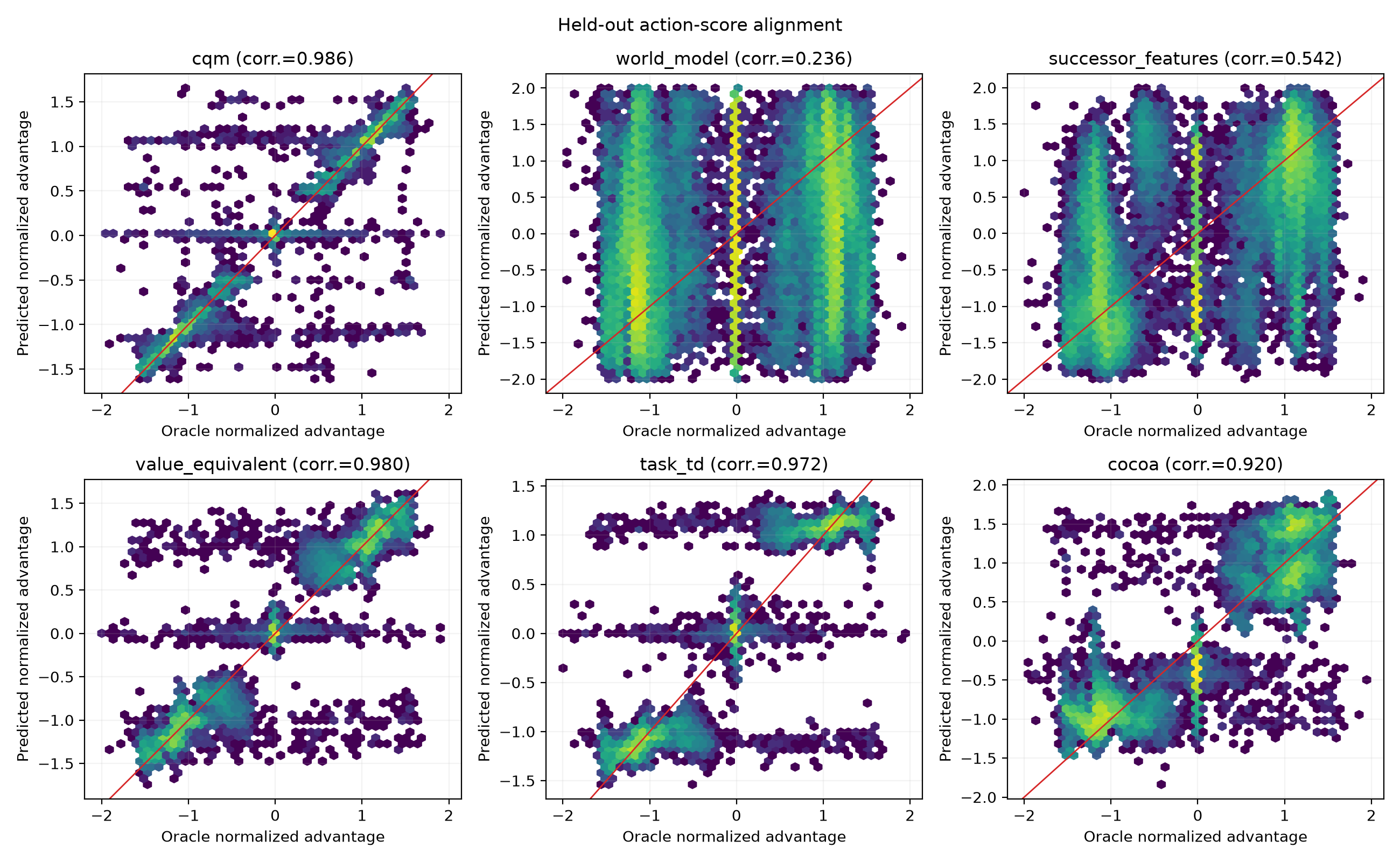}
  \caption{Held-out action-score alignment for one representative Reacher
  seed.  Each hexagon
  aggregates $(s,a,g)$ triples.  The horizontal coordinate is the oracle
  advantage and the vertical coordinate is the model advantage, each centered
  over actions and normalized within $(s,g)$.  The red diagonal is perfect
  agreement.  \CQM{} concentrates tightly around the diagonal
  (correlation $0.986$), while the absolute world model ($0.236$) and
  successor features ($0.542$) show weak or fragmented action geometry.
  Reward-aware methods are strong but less tightly aligned.}
  \label{fig:reacher-alignment}
\end{figure*}

\begin{figure}[t]
  \centering
  \includegraphics[width=\linewidth]{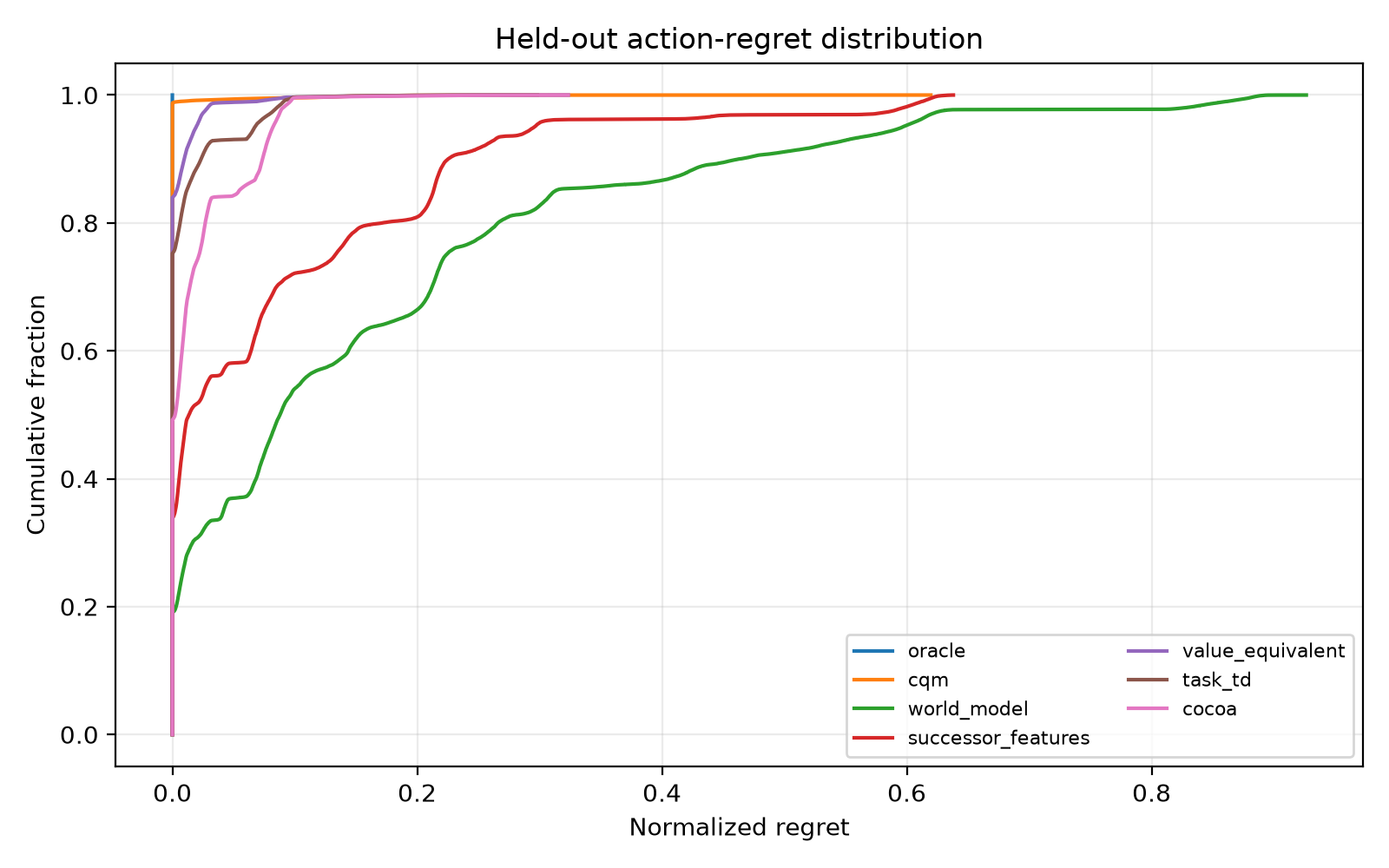}
  \caption{CDF of held-out normalized regret for the same Reacher run.  Curves
  farther left and higher are better.  \CQM{} places almost all queries at
  zero or negligible regret; the world model has a long tail, and successor
  features improves on the world model but remains substantially worse than
  the quotient and reward-aware critics.}
  \label{fig:reacher-regret}
\end{figure}

\Cref{fig:reacher-alignment} explains why action accuracy and effect quality
are related but not identical.  Accuracy uses only the largest score, whereas
the alignment plot tests the entire centered action-score vector.  A model can
occasionally select the correct action while badly distorting the remaining
ordering; conversely, a near miss can count as inaccurate while incurring
little regret.  \Cref{fig:reacher-regret} therefore complements the strict
accuracy statistic by showing the full distribution of decision costs.

\section{Theoretical Properties}

We now formalize the properties exercised by the preceding experiments.  This
section characterizes what information the quotient retains and removes and
then separates exact representation statements from finite-feature,
finite-action, and estimation errors.  All statements apply pointwise in $s$
and extend to measurable function spaces under the corresponding
integrability assumptions.

\paragraph{Roadmap.}
There are three logically distinct questions.  First, is the quotient the
right \emph{information object}?  \Cref{thm:duality,thm:gauge} show that it
preserves exactly all bounded-reward action comparisons and removes only an
action-independent baseline.  Second, can that object be \emph{identified}?
\Cref{thm:pairwise,thm:projection} show that pairwise branch differences
identify a unique centered representative.  Third, what happens after
\emph{approximation}?  The common-mode cancellation result
(\Cref{thm:cancellation}) explains why paired targets can be statistically
easier; the action-effect rank result (\Cref{thm:rank}) explains when a
low-rank decoder is plausible; and the regret and reward-projection results
(Theorem~\ref{thm:regret} and Proposition~\ref{prop:reward-error}) explain how
model and representation errors affect decisions.  Keeping these questions
separate prevents two common confusions: measure-level sufficiency does not
imply that every finite feature map is sufficient, and target cancellation
does not imply that a finite network estimates the remaining action effect
without error.

\paragraph{Why centered values suffice.}
For fixed $s$ and reward $r$, write
\[
  \bar Q^\pi_{r,\rho,H}(s)
  =
  \sum_b\rho(b\mid s)Q^\pi_{r,H}(s,b).
\]
This term depends on the state and reward but not on the candidate action.
Consequently,
\[
  \argmax_a Q^\pi_{r,H}(s,a)
  =
  \argmax_a\!\left[
    Q^\pi_{r,H}(s,a)-\bar Q^\pi_{r,\rho,H}(s)
  \right].
\]
The quotient therefore discards an additive baseline that is provably
irrelevant to the current action ordering; it does not discard any pairwise
contrast.

\begin{theorem}[Effect--value duality]
\label{thm:duality}
For any bounded measurable reward $r:\cZ\to\R$, continuation policy $\pi$,
horizon $H$, and reference distribution $\rho$,
\begin{equation}
  Q^\pi_{r,H}(s,a)
  -
  \sum_b\rho(b\mid s)Q^\pi_{r,H}(s,b)
  =
  \int_{\cZ}r(z)\,\mathrm dK^\pi_{H,s,a}(z).
  \label{eq:duality}
\end{equation}
Consequently, the signed measure $K^\pi_{H,s,a}$ is a reward-independent
advantage generator.  For $r_w=w^\top\phi$, the right-hand side reduces to
$w^\top\kappa^\pi_H(s,a)$.
\end{theorem}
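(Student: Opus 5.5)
The plan is to prove \cref{eq:duality} by linearity of the integral over finite signed measures, with the definition \cref{eq:measure-value} as the bridge. First we check that every object involved is a finite signed measure. This also shows that each integral of the bounded measurable $r$ is well defined. Each $M^\pi_{H,s,b}$ is a positive measure of finite total mass $m_H$, as noted after \cref{eq:successor-measure}. Since $\cA$ is finite, $\overline M^\pi_{H,s}=\sum_b\rho(b\mid s)M^\pi_{H,s,b}$ is a finite convex combination and hence a positive measure of mass $m_H$. Therefore $K^\pi_{H,s,a}=M^\pi_{H,s,a}-\overline M^\pi_{H,s}$ lies in $\mathfrak M$. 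Its total variation is at most $2m_H$, so $|\int r\,\mathrm dK|\le 2m_H\|r\|_\infty<\infty$.

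The core step uses the fact that integration against a bounded measurable $r$ is a linear functional on $\mathfrak M$. This holds by definition of the integral with respect to a signed measure, $\int r\,\mathrm d(\mu-\nu)=\int r\,\mathrm d\mu-\int r\,\mathrm d\nu$ for finite positive $\mu,\nu$, and it extends to finite nonnegative combinations. The cleanest route is to verify linearity on indicators $r=\mathbf 1_B$, where it is the defining identity $K(B)=M(B)-\sum_b\rho(b\mid s)M_b(B)$. It then passes to simple functions by linearity, and to bounded measurable $r$ by uniform approximation with simple functions, since all measures are finite. Applying this gives
\[
\int r\,\mathrm dK^\pi_{H,s,a}=\int r\,\mathrm dM^\pi_{H,s,a}-\sum_b\rho(b\mid s)\int r\,\mathrm dM^\pi_{H,s,b},
\]
and substituting \cref{eq:measure-value} for each term yields \cref{eq:duality}.

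For the feature reward $r_w=w^\top\phi$, $r_w$ may be unbounded, so we invoke the integrability of $\phi$ assumed in the finite feature projection subsection instead of boundedness. Componentwise, $\int|\phi_j|\,\mathrm d|K|\le\int|\phi_j|\,\mathrm dM_a+\sum_b\rho(b\mid s)\int|\phi_j|\,\mathrm dM_b<\infty$. The same linearity argument then applies, with monotone or dominated convergence replacing uniform approximation. Pulling the constant vector $w$ out of the integral gives $\int w^\top\phi\,\mathrm dK=w^\top\int\phi\,\mathrm dK=w^\top\kappa^\pi_H(s,a)$ by \cref{eq:feature-projection}.

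The statement that $K$ is a reward-independent advantage generator is then immediate, because $K$ does not depend on $r$. The only genuine obstacle is the measure-theoretic care just described: justifying linearity of integration over signed, rather than positive, measures, and handling possibly unbounded feature rewards. Everything else is a substitution.
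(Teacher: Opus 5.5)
Your proposal is correct and follows the paper's own proof. Both move the finite $\rho$-weighted sum inside the integral by linearity, apply the definitions of $\overline M^\pi_{H,s}$ and $K^\pi_{H,s,a}$, and pull $w$ out coordinatewise to obtain $w^\top\kappa^\pi_H(s,a)$; your extra checks (the $2m_H$ total-variation bound, linearity over signed measures via indicators and uniform approximation, and the integrability of $\phi$ since $r_w$ need not be bounded) spell out measure-theoretic points the paper leaves implicit without changing the argument.
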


\begin{proof}
By \cref{eq:measure-value}, each action value is the integral of the same
reward against the corresponding successor measure.  Because the action set
is finite, the reference-weighted sum can be moved inside the integral:
\begin{align*}
  Q^\pi_{r,H}(s,a)
  -\sum_b\rho(b\mid s)Q^\pi_{r,H}(s,b)
  &=
  \int r\,\mathrm dM^\pi_{H,s,a}
  -\sum_b\rho(b\mid s)\int r\,\mathrm dM^\pi_{H,s,b}\\
  &=
  \int r\,\mathrm d\!\left(
    M^\pi_{H,s,a}
    -\sum_b\rho(b\mid s)M^\pi_{H,s,b}
  \right)\\
  &=
  \int r\,\mathrm d\!\left(
    M^\pi_{H,s,a}-\overline M^\pi_{H,s}
  \right)\\
  &=\int r\,\mathrm dK^\pi_{H,s,a}.
\end{align*}
The penultimate equality is the definition of the reference measure, and the
last is the definition of the effect measure.  For $r_w=w^\top\phi$, linearity
holds coordinate by coordinate, so
\[
  \int w^\top\phi(z)\,\mathrm dK(z)
  =
  w^\top\!\left(\int\phi(z)\,\mathrm dK(z)\right)
  =
  w^\top\kappa(s,a),
\]
where the final equality is \cref{eq:feature-projection}.
\end{proof}

\begin{proposition}[Zero mass and reference centering]
\label{prop:zero-mass}
For every $s,a$,
$K^\pi_{H,s,a}(\cZ)=0$, and
$\sum_a\rho(a\mid s)K^\pi_{H,s,a}=0$ as a signed measure.  Consequently,
$\sum_a\rho(a\mid s)\int r\,\mathrm dK^\pi_{H,s,a}=0$ for every bounded
measurable $r$.
\end{proposition}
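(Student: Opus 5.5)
The plan is to prove the three claims in order, using only the definitions in \cref{eq:successor-measure,eq:effect-measure} and, for the last claim, \cref{thm:duality} or plain linearity of the integral.

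\textbf{Zero total mass.} Evaluating \cref{eq:successor-measure} at $B=\cZ$ makes every indicator equal to one. The expectation is then the deterministic sum $m_H=\sum_{k=0}^{H-1}\gamma^k$, whatever the values of $s$ and $a$. The reference measure $\overline M^\pi_{H,s}$ is a $\rho$-convex combination, and $\sum_b\rho(b\mid s)=1$, so it also has mass $m_H$. Both measures are finite, so subtracting them is legitimate in $\mathfrak M$, and
\[
K^\pi_{H,s,a}(\cZ)=m_H-m_H=0 .
\]

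\textbf{Reference centering.} I will verify this setwise. Fix a measurable $B$. Since $\cA$ is finite, the sums can be exchanged freely:
\[
\sum_a\rho(a\mid s)K^\pi_{H,s,a}(B)=\sum_a\rho(a\mid s)M^\pi_{H,s,a}(B)-\Big(\sum_a\rho(a\mid s)\Big)\overline M^\pi_{H,s}(B).
\]
The first sum equals $\overline M^\pi_{H,s}(B)$ by definition, and the bracketed factor is $1$, so the expression vanishes. This holds for every $B$, so the weighted sum is the zero signed measure.

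\textbf{Consequence for rewards.} For bounded measurable $r$, each $\int r\,\mathrm dK^\pi_{H,s,a}$ is finite, because $r$ is bounded and $|K^\pi_{H,s,a}|$ has total variation at most $2m_H$. Integration against finite signed measures is linear in the measure, so
\[
\sum_a\rho(a\mid s)\int r\,\mathrm dK^\pi_{H,s,a}=\int r\,\mathrm d\Big(\sum_a\rho(a\mid s)K^\pi_{H,s,a}\Big)=0 .
\]
As an alternative route, \cref{thm:duality} turns the left side into the $\rho$-average of $Q-\bar Q$, which is zero.

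No step is a real obstacle. The only point needing care is that all measures involved are finite, with mass bounded by $m_H$. This guarantees that subtraction, finite linear combination, and linearity of the integral are well defined for signed measures, with no $\infty-\infty$ issue; the finite horizon ensures it.
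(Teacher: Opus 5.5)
Your proof is correct and follows the paper's argument essentially step for step. Each successor measure has total mass $m_H$, and hence so does the $\rho$-mixture. The centering claim comes from expanding $\sum_a\rho(a\mid s)K^\pi_{H,s,a}$ and using $\sum_a\rho(a\mid s)=1$, and the reward consequence is linearity of the integral. Your setwise check and the finiteness remark (total variation at most $2m_H$) just spell out the same steps in more detail.
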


\begin{proof}
For every first action, the successor measure places total discounted mass
$m_H=\sum_{k=0}^{H-1}\gamma^k$ on $\cZ$.  Therefore
\[
  \overline M^\pi_{H,s}(\cZ)
  =
  \sum_b\rho(b\mid s)M^\pi_{H,s,b}(\cZ)
  =
  m_H\sum_b\rho(b\mid s)
  =
  m_H.
\]
Subtracting the two masses gives
$K^\pi_{H,s,a}(\cZ)=m_H-m_H=0$.  For reference centering, expand the
definition of $K$ and keep the normalization of $\rho$ explicit:
\[
  \sum_a\rho(a\mid s)K^\pi_{H,s,a}
  =
  \sum_a\rho(a\mid s)M^\pi_{H,s,a}
  -
  \left(\sum_a\rho(a\mid s)\right)\overline M^\pi_{H,s}
  =
  \overline M^\pi_{H,s}-\overline M^\pi_{H,s}
  =
  0.
\]
This is equality of signed measures.  Integrating any bounded measurable
reward against both sides and using linearity gives the final claim.
\end{proof}

\begin{theorem}[Measure gauge invariance and completeness]
\label{thm:gauge}
For two action-indexed finite successor-measure families
$\mathcal M_1,\mathcal M_2\in\mathfrak F$, the following are equivalent:
\begin{enumerate}
  \item $\mathcal M_1-\mathcal M_2\in\mathfrak B$;
  \item their $\rho$-centered signed effect measures are equal;
  \item for every $s,a,b$ and bounded measurable $r$,
  \[
    \int r\,\mathrm d(M^1_{s,a}-M^1_{s,b})
    =
    \int r\,\mathrm d(M^2_{s,a}-M^2_{s,b}).
  \]
\end{enumerate}
Thus, the quotient is invariant to all action-independent future additions,
and it identifies no two models that can be distinguished by any bounded
reward action comparison.
\end{theorem}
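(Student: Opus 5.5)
I will prove the cycle of implications (1)$\Rightarrow$(2)$\Rightarrow$(3)$\Rightarrow$(1), working pointwise in $s$. Throughout, write $\mathcal D=\mathcal M_1-\mathcal M_2\in\mathfrak F$, so $D_{s,a}=M^1_{s,a}-M^2_{s,a}$ is a finite signed measure. The $\rho$-centering map $\mathcal M\mapsto \mathcal M-\sum_b\rho(b\mid s)\mathcal M_{\cdot,b}$ is linear on $\mathfrak F$. Hence statement (2) is equivalent to the centered version of $\mathcal D$ vanishing, that is,
\[
  D_{s,a}=\sum_b\rho(b\mid s)D_{s,b}\quad\text{for all }s,a .
\]

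For (1)$\Rightarrow$(2), if $D_{s,a}=C_s$ for every $a$, then $\sum_b\rho(b\mid s)D_{s,b}=C_s$ because $\rho(\cdot\mid s)$ sums to one. The centered difference is therefore zero. For (2)$\Rightarrow$(3), equality of centered effects gives $K^1_{s,a}-K^1_{s,b}=K^2_{s,a}-K^2_{s,b}$, since the reference terms cancel within each model. Moreover, $K^i_{s,a}-K^i_{s,b}=M^i_{s,a}-M^i_{s,b}$. Integrating a bounded measurable $r$ against these equal finite signed measures yields (3), just as in the linearity step of \cref{thm:duality}.

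The only substantive step is (3)$\Rightarrow$(1). Condition (3) says that $\int r\,\mathrm d(D_{s,a}-D_{s,b})=0$ for every bounded measurable $r$. Take $r=\1_B$ for an arbitrary measurable $B\subseteq\cZ$; indicators are bounded and measurable. This gives $(D_{s,a}-D_{s,b})(B)=0$ for all $B$, so $D_{s,a}=D_{s,b}$ as signed measures. No Hahn decomposition or monotone-class argument is needed, since indicators already determine a finite signed measure setwise. Fixing any reference action $a_0$ and setting $C_s=D_{s,a_0}$ exhibits $\mathcal D\in\mathfrak B$, which is (1). Equivalently, averaging over $b$ with weights $\rho$ gives the centered form of (2) directly.

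I expect the main point of care to be this last step. One must ensure the class of test rewards is rich enough to separate signed measures, and bounded measurable rewards are. The hypothesis that the measures are finite is what makes $D$ a well-defined element of the vector space $\mathfrak M$ and the integrals finite. The closing sentence of the theorem then restates the two directions. Invariance is (1)$\Rightarrow$(3): action-independent additions change no action comparison. Completeness is (3)$\Rightarrow$(1): models agreeing on all bounded-reward comparisons lie in the same class of $\mathfrak F/\mathfrak B$.
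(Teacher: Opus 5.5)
Your proposal is correct and follows essentially the same route as the paper: the cycle $1\Rightarrow2\Rightarrow3\Rightarrow1$, with indicator rewards forcing each pairwise signed-measure difference to vanish and a fixed reference action $a_0$ supplying the action-independent $C_s=M^1_{s,a_0}-M^2_{s,a_0}$. Your reformulation of statement~2 as the vanishing of the centered difference $\mathcal D$, using linearity of centering, is only a light repackaging of the same argument.
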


\begin{proof}
We prove the cycle of implications.

\emph{$1\Rightarrow2$.}  Membership in $\mathfrak B$ means that for each
state there is a signed measure $C_s$ satisfying
$M^1_{s,a}-M^2_{s,a}=C_s$ for every action.  Centering this difference gives
\[
  C_s-\sum_b\rho(b\mid s)C_s
  =
  C_s-C_s
  =
  0,
\]
so the two centered families coincide.

\emph{$2\Rightarrow3$.}  Write the common centered family as $K$.  Centering
subtracts the same reference measure from every action, hence
$M^i_{s,a}-M^i_{s,b}=K^i_{s,a}-K^i_{s,b}$ for each model $i$.  Equality of
the centered families therefore implies equality of the two pairwise signed
measures.  Integrating the bounded reward $r$ against them gives statement~3.

\emph{$3\Rightarrow1$.}  Fix $s,a,b$ and define
\[
  \Delta_{s,a,b}
  =
  (M^1_{s,a}-M^1_{s,b})
  -(M^2_{s,a}-M^2_{s,b}).
\]
Statement~3 says that $\int r\,\mathrm d\Delta_{s,a,b}=0$ for every bounded
measurable $r$.  In particular, taking $r$ to be the indicator of any
measurable set shows that $\Delta_{s,a,b}$ assigns zero mass to every such
set; therefore it is the zero signed measure.  Fix an arbitrary reference
action $b_0$ and set $C_s=M^1_{s,b_0}-M^2_{s,b_0}$.  Rearranging
$\Delta_{s,a,b_0}=0$ yields
$M^1_{s,a}-M^2_{s,a}=C_s$ for every $a$.  The difference is thus
action-independent, which is exactly statement~1.
\end{proof}

\begin{corollary}[Minimal decision sufficiency]
\label{cor:minimal}
Any representation sufficient to recover pairwise action values for every
bounded measurable reward must distinguish every pair of classes in
$\mathfrak F/\mathfrak B$.  Therefore the signed effect measure $K$ is the
coarsest exact representation for this reward query family.
\end{corollary}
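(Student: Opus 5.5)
The corollary splits into two parts: a necessity (distinguishing) claim and a sufficiency/coarsest claim. Both come almost directly from \cref{thm:gauge} together with \cref{thm:duality}.

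I would first make precise what a representation is. It is a map $\Phi$ on $\mathfrak F$, and it is sufficient for the query family if, for every bounded measurable $r$ and every $s,a,b$, the pairwise value $\int r\,\mathrm d(M_{s,a}-M_{s,b})$ is a function of $\Phi(\mathcal M)$ alone.

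\emph{Necessity.} Take $\mathcal M_1,\mathcal M_2$ in different classes of $\mathfrak F/\mathfrak B$, so that $\mathcal M_1-\mathcal M_2\notin\mathfrak B$. By the implication $3\Rightarrow1$ of \cref{thm:gauge} (used in contrapositive form), there exist $s,a,b$ and a bounded $r$ for which the pairwise integrals differ. In fact $r$ can be chosen as an indicator, exactly as in that proof. A sufficient $\Phi$ must return different answers to this query, so $\Phi(\mathcal M_1)\neq\Phi(\mathcal M_2)$. Hence $\Phi$ separates every pair of classes; equivalently, $\Phi$ is injective on classes, and the quotient map factors through $\Phi$.

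\emph{Sufficiency and coarseness of $K$.} Consider the map $\mathcal M\mapsto K$ given by $\rho$-centering. Its pairwise contrasts satisfy $K_{s,a}-K_{s,b}=M_{s,a}-M_{s,b}$, so every pairwise value is read off from $K$. Absolute centered advantages are also read off, by \cref{thm:duality}. So this map is sufficient. By $1\Leftrightarrow2$ of \cref{thm:gauge}, two families have the same $K$ if and only if they lie in the same class. Therefore $K$ is constant on classes and separates different classes, i.e.\ it is a bijective encoding of $\mathfrak F/\mathfrak B$.

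\emph{Minimality.} Combining the two parts: every sufficient $\Phi$ induces a partition at least as fine as the quotient partition, while $K$ induces exactly that partition. Hence $K$ is a function of any sufficient $\Phi$. Explicitly, set $K=g(\Phi(\mathcal M))$, where $g$ recovers each class from $\Phi$ by the injectivity established above. This is what ``coarsest'' means.

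The only delicate point is to state ``coarsest'' as this factorization property, and to make sure sufficiency is taken over pairwise values rather than absolute values. Otherwise the baseline in $\mathfrak B$ would be required, and the claim would fail. Everything else is bookkeeping on top of \cref{thm:gauge}.
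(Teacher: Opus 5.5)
Your proposal is correct and follows essentially the same route as the paper. Necessity comes from the contrapositive of $3\Rightarrow1$ in \cref{thm:gauge}: a sufficient representation cannot merge two models that some bounded-reward pairwise query distinguishes. Coarseness comes from $1\Leftrightarrow2$, which says $K$ labels exactly the quotient classes. Your explicit check that $K$ is itself sufficient, via $K_{s,a}-K_{s,b}=M_{s,a}-M_{s,b}$, and your formalization of ``coarsest'' as the factorization $K=g\circ\Phi$ make precise steps the paper leaves implicit.
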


\begin{proof}
Let $T$ be any representation from which every bounded-reward pairwise action
value can be recovered.  Suppose, for contradiction, that $T$ maps two
distinct quotient classes to the same representation.  Since the classes are
distinct, statement~1 of \cref{thm:gauge} is false.  By the equivalence in that
theorem, statement~3 is also false: there must be a state, two actions, and a
bounded measurable reward for which the two models yield different pairwise
values.  A decoder receiving the same value of $T$ for both models cannot
return two different answers to that query.  This contradicts the assumed
sufficiency of $T$.  Thus every sufficient representation must separate all
quotient classes.  Since $K$ labels exactly those classes by
\cref{thm:gauge}, it is the coarsest exact representation.
\end{proof}

\begin{theorem}[Identification by pairwise learning]
\label{thm:pairwise}
Fix $s$ and let $Y_a\in\R^d$ be an unbiased branched-rollout target with
$\E[Y_a\mid s]=F^\pi_H(s,a)$.  Let $G_s$ be a connected comparison graph on
the actions, and consider the population objective
\begin{equation}
  \mathcal L_s(f)
  =
  \E_{(a,b)\sim G_s}
  \E\left[
    \|f(s,a)-f(s,b)-(Y_a-Y_b)\|_2^2
    \,\middle|\,s,a,b
  \right].
  \label{eq:pairwise-loss}
\end{equation}
If the function class can realize the conditional mean differences, every
population minimizer satisfies
\[
  f^\star(s,a)=F^\pi_H(s,a)+c(s)
\]
for an action-independent vector $c(s)$.  Consequently,
$\mathcal C_\rho f^\star=\kappa^\pi_H$.
\end{theorem}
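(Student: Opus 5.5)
The plan is to reduce the population objective \cref{eq:pairwise-loss} to a deterministic least-squares problem on edge differences by a bias--variance decomposition. Then we show that a connected graph pins down $f$ up to an additive constant.

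For a fixed edge $(a,b)$, write $\Delta_{ab}=F^\pi_H(s,a)-F^\pi_H(s,b)=\E[Y_a-Y_b\mid s,a,b]$, which follows from unbiasedness. We also write $g_{ab}=f(s,a)-f(s,b)$. Expanding the square around the conditional mean gives
\[
  \E\bigl[\|g_{ab}-(Y_a-Y_b)\|_2^2\mid s,a,b\bigr]
  =
  \|g_{ab}-\Delta_{ab}\|_2^2
  +\E\bigl[\|(Y_a-Y_b)-\Delta_{ab}\|_2^2\mid s,a,b\bigr].
\]
The cross term vanishes because $g_{ab}$ is deterministic given $(s,a,b)$. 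The second term does not depend on $f$. Hence $\mathcal L_s(f)=\E_{(a,b)\sim G_s}\|g_{ab}-\Delta_{ab}\|_2^2+\text{const}$.

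By realizability, some $f$ in the class attains $g_{ab}=\Delta_{ab}$ on every edge; for instance $f=F^\pi_H(s,\cdot)$ does, or any class member matching these differences. So the minimal value equals the constant. Every minimizer must therefore make each weighted edge term vanish. I take "$(a,b)\sim G_s$" to mean a distribution with positive mass on every edge of $G_s$, and I would state this assumption explicitly. This step is the one point where care is needed, since an edge with zero probability imposes no constraint. Under that assumption, every minimizer satisfies $f^\star(s,a)-f^\star(s,b)=F^\pi_H(s,a)-F^\pi_H(s,b)$ for every edge.

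Next, set $c(s)=f^\star(s,a_0)-F^\pi_H(s,a_0)$ for a fixed action $a_0$. For any action $a$, connectivity gives a path $a_0=b_0,b_1,\dots,b_k=a$ in $G_s$. Telescoping the edge identities along this path gives $f^\star(s,a)-f^\star(s,a_0)=F^\pi_H(s,a)-F^\pi_H(s,a_0)$. Note that the edge identity is symmetric in orientation, so undirected paths suffice. Rearranging yields $f^\star(s,a)=F^\pi_H(s,a)+c(s)$.

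Finally, $f^\star-F^\pi_H\in\cB$. Centering annihilates $\cB$, exactly as in the $1\Rightarrow2$ step of \cref{thm:gauge} applied at the feature level. Since $\sum_b\rho(b\mid s)c(s)=c(s)$, we get $\mathcal C_\rho f^\star=\mathcal C_\rho F^\pi_H=\kappa^\pi_H$ by \cref{eq:kappa}. The only steps requiring attention are the positivity-of-edge-weights assumption and the finiteness of second moments of $Y_a$, which the decomposition needs. The rest is routine.
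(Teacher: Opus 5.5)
Your proposal is correct and takes essentially the same route as the paper's proof. Both arguments decompose the loss per edge into bias and variance, zero every positively weighted edge term at a realizable minimum, telescope along a path from a reference action $a_0$, and use centering to annihilate $c(s)$. You also make explicit two assumptions the paper leaves implicit: that every edge of $G_s$ has positive sampling mass, which the paper only gestures at with ``on every edge with positive sampling probability'', and that the second moments of $Y_a$ are finite.
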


\begin{proof}
Fix an edge $(a,b)$ and abbreviate
$D_{ab}=Y_a-Y_b$ and $u_{ab}=f(s,a)-f(s,b)$.  Conditional on the edge, the
squared-loss decomposition is
\[
  \E[\|u_{ab}-D_{ab}\|_2^2\mid s,a,b]
  =
  \|u_{ab}-\E[D_{ab}\mid s,a,b]\|_2^2
  +
  \E[\|D_{ab}-\E D_{ab}\|_2^2\mid s,a,b].
\]
The second term does not depend on $f$.  Hence every realizable population
minimum must set the first term to zero on every edge with positive sampling
probability:
\[
  f(s,a)-f(s,b)
  =
  \E[Y_a-Y_b\mid s,a,b]
  =
  F^\pi_H(s,a)-F^\pi_H(s,b).
\]
The last equality uses unbiasedness of both branch targets.  Choose a
reference action $a_0$.  For any $a$, connectivity supplies a path
$a_0=v_0,v_1,\ldots,v_m=a$.  Sum the edge equalities along this path.  Both
sides telescope, giving
\[
  f(s,a)-f(s,a_0)
  =
  F^\pi_H(s,a)-F^\pi_H(s,a_0).
\]
After rearrangement,
$f(s,a)=F^\pi_H(s,a)+c(s)$ with
$c(s)=f(s,a_0)-F^\pi_H(s,a_0)$, and the same $c(s)$ works for every action.
Finally,
$\mathcal C_\rho c(s)=c(s)-\sum_b\rho(b\mid s)c(s)=0$; centering therefore
removes the only unidentified component and returns $\kappa^\pi_H$.
\end{proof}

\begin{theorem}[Canonical minimum-norm representative]
\label{thm:projection}
At the finite feature level, equip action-indexed vectors at state $s$ with
the weighted inner product
\[
  \langle F,G\rangle_{\rho,s}
  =
  \sum_a\rho(a\mid s)F(s,a)^\top G(s,a).
\]
Then $\mathcal C_\rho$ is the orthogonal projection onto the zero-mean
subspace
$\{K:\sum_a\rho(a\mid s)K(s,a)=0\}$.
Moreover, $\mathcal C_\rho F$ is the unique minimum-norm element of $[F]$.
\end{theorem}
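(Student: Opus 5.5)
The plan is to work pointwise in $s$, on the finite-dimensional space $V_s$ of maps $a\mapsto F(s,a)\in\R^d$ with the inner product $\langle\cdot,\cdot\rangle_{\rho,s}$. Full support of $\rho$ makes this inner product positive definite, so it is a genuine Hilbert space. On $V_s$ the gauge subspace $\cB$ restricts to the constant maps $a\mapsto c$, and the target subspace is $\mathcal K_s=\{K:\sum_a\rho(a\mid s)K(s,a)=0\}$. I will establish three facts about $\mathcal C_\rho$ and then deduce both claims.

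\textbf{Step 1 (projection).} Linearity of $\mathcal C_\rho$ is immediate from \cref{eq:center}. Its range lies in $\mathcal K_s$, because
\[
\sum_a\rho(a\mid s)\Bigl(F(s,a)-\sum_b\rho(b\mid s)F(s,b)\Bigr)=0
\]
by $\sum_a\rho(a\mid s)=1$. This is the same computation as in \cref{prop:zero-mass}. Conversely, if $K\in\mathcal K_s$, the subtracted mean is zero and $\mathcal C_\rho K=K$. So $\mathcal C_\rho$ is idempotent with range exactly $\mathcal K_s$.

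\textbf{Step 2 (orthogonality).} Next I identify the kernel and its orthogonal complement. The kernel of $\mathcal C_\rho$ is the set of $F$ with $F(s,a)=\bar F(s)$ for all $a$, i.e.\ the constant maps, which is $\cB$ restricted to $s$. For a constant $c$ and any $K\in\mathcal K_s$,
\[
\langle c,K\rangle_{\rho,s}=c^\top\sum_a\rho(a\mid s)K(s,a)=0.
\]
Hence $\ker\mathcal C_\rho\perp\operatorname{ran}\mathcal C_\rho$. The decomposition $F=\mathcal C_\rho F+\bar F(s)$ shows that $V_s$ is the orthogonal direct sum of these two subspaces. An idempotent whose kernel and range are orthogonal is the orthogonal projection onto its range. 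Equivalently, one can check self-adjointness directly, since $\langle \mathcal C_\rho F,G\rangle_{\rho,s}=\langle F,G\rangle_{\rho,s}-\bar F^\top\bar G$ is symmetric in $F$ and $G$.

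\textbf{Step 3 (minimum norm).} Every element of $[F]$ has the form $G=F+c$ with $c$ constant, so $G=\mathcal C_\rho F+(\bar F+c)$ with $\bar F+c$ constant. By Step 2 and Pythagoras,
\[
\|G\|_{\rho,s}^2=\|\mathcal C_\rho F\|_{\rho,s}^2+\|\bar F+c\|^2.
\]
Here the second norm is the Euclidean norm, because $\rho$ sums to one. This is minimized exactly when $c=-\bar F$, which gives $G=\mathcal C_\rho F$; strict positivity of the second term otherwise yields uniqueness. Note also that $\mathcal C_\rho F\in[F]$, since it differs from $F$ by the constant $-\bar F$.

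The main point needing care is the positive-definiteness used in uniqueness and in the Hilbert-space framing, which relies on full support of $\rho$. Without it, the norm only sees actions with $\rho(a\mid s)>0$, and uniqueness would hold only modulo those actions. The extension beyond a single state, to functions over $\cS$, follows by applying the argument pointwise under the integrability assumptions stated at the start of the section.
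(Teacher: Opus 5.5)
Your proof is correct and takes essentially the same route as the paper. Both arguments show that the zero-mean maps and the action-independent maps are orthogonal and span the space through $F=\mathcal C_\rho F+\bar F_\rho$, then get minimality and uniqueness from the Pythagorean identity; your idempotence/kernel--range framing and self-adjointness check are only a cosmetic variant. One small correction to your closing caveat: uniqueness within $[F]$ does not need full support, because, as your own Step~3 shows, a constant map $c$ satisfies $\|c\|_{\rho,s}=\|c\|_2$ for any $\rho$, so full support matters only for $\langle\cdot,\cdot\rangle_{\rho,s}$ to be a genuine inner product on all of $V_s$.
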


\begin{proof}
Write $\bar F_\rho(s)=\sum_a\rho(a\mid s)F(s,a)$.  By construction,
$\mathcal C_\rho F=F-\bar F_\rho$ has zero weighted mean because
\[
  \sum_a\rho(a\mid s)(\mathcal C_\rho F)(s,a)
  =
  \bar F_\rho(s)
  -\left(\sum_a\rho(a\mid s)\right)\bar F_\rho(s)
  =
  0.
\]
For any zero-mean $K$ and action-independent $B(s,a)=b(s)$,
\[
  \langle K,B\rangle_{\rho,s}
  =
  \left(\sum_a\rho(a\mid s)K(s,a)\right)^\top b(s)=0.
\]
Thus the two subspaces are orthogonal.  They also span the whole space, since
every $F$ has the explicit decomposition
\[
  F=\mathcal C_\rho F+\bar F_\rho,
\]
whose first term is zero-mean and whose second is action-independent.
Their intersection contains only zero: an action-independent $B=b(s)$ with
zero weighted mean satisfies $b(s)\sum_a\rho(a\mid s)=b(s)=0$.  They are
therefore orthogonal complements, and $\mathcal C_\rho$ is the orthogonal
projection onto the zero-mean subspace.

Every representative in $[F]$ differs from $\mathcal C_\rho F$ by an
action-independent $B$.  Orthogonality eliminates the cross term, so
\[
  \|\mathcal C_\rho F+B\|_{\rho,s}^2
  =
  \|\mathcal C_\rho F\|_{\rho,s}^2+\|B\|_{\rho,s}^2.
\]
The second term is nonnegative and is zero only when $B=0$, proving both
minimality and uniqueness.
\end{proof}

\begin{remark}[Dependence on the reference distribution]
The quotient class does not depend on $\rho$, but its coordinates do.  For
two full-support references $\rho$ and $\rho'$,
\[
  \mathcal C_{\rho'}F(s,a)-\mathcal C_\rho F(s,a)
  =
  \bar F_\rho(s)-\bar F_{\rho'}(s),
\]
which is constant across $a$.  Hence both representatives give identical
pairwise differences and action rankings for every reward.  Uniform $\rho$ is
convenient for a symmetric finite action set; a behavior-matched $\rho$ may be
preferable when some actions are sampled more often.
\end{remark}

\begin{remark}[Why direct quotient fitting differs at finite capacity]
If an absolute model class could represent $F^\pi_H$ exactly, centering its
prediction would recover $\kappa^\pi_H$.  The distinction arises before that
limit.  An absolute loss allocates its finite decoder rank and gradient signal
to both $\mathcal C_\rho F$ and the orthogonal action-independent component
$(I-\mathcal C_\rho)F$.  Direct \CQM{} fitting presents only the former to the
approximator.  Because projection of the target and projection onto a
restricted nonlinear model class need not commute, post-hoc centering does
not in general undo capacity spent fitting the common component.
\end{remark}

\begin{theorem}[Exact common-mode cancellation]
\label{thm:cancellation}
Suppose synchronized branch outcomes admit the decomposition
\begin{equation}
  \widehat F(s,a)=C(s,\xi)+E(s,a,\xi),
  \label{eq:noise-decomp}
\end{equation}
where $C$ can be arbitrarily high-dimensional or high-variance but is shared
by every first-action branch under common random numbers, a classical
variance-reduction coupling \citep{glasserman2004monte,glynn2002common}.  Then
\begin{equation}
  \mathcal C_\rho\widehat F(s,a)
  =
  E(s,a,\xi)-\sum_b\rho(b\mid s)E(s,b,\xi),
  \label{eq:cancelled}
\end{equation}
which is independent of $C$.  In particular, the variance of $C$ contributes
nothing to the paired quotient target.
\end{theorem}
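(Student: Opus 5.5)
The plan is to apply the centering operator $\mathcal C_\rho$ from \cref{eq:center} to the realized branch family, with $s$ and the realized noise $\xi$ held fixed. Under common random numbers the same $\xi$ (and the same continuation actions) enters every branch, so for this fixed realization $a\mapsto\widehat F(s,a)$ is an ordinary action-indexed vector. By \cref{eq:noise-decomp} it splits as the sum of $a\mapsto C(s,\xi)$ and $a\mapsto E(s,a,\xi)$. The first summand does not depend on $a$, so it lies in the gauge subspace $\cB$ of \cref{eq:gauge-subspace}, applied pathwise at the realized $\xi$.

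The computation itself has two short steps. First, $\mathcal C_\rho$ is linear, so $\mathcal C_\rho\widehat F=\mathcal C_\rho C+\mathcal C_\rho E$. Second, the proof of \cref{thm:projection} shows that $\mathcal C_\rho$ annihilates action-independent elements. Concretely, $C(s,\xi)-\sum_b\rho(b\mid s)C(s,\xi)=C(s,\xi)\bigl(1-\sum_b\rho(b\mid s)\bigr)=0$, using only the normalization of $\rho$. What remains is exactly $E(s,a,\xi)-\sum_b\rho(b\mid s)E(s,b,\xi)$, which is \cref{eq:cancelled}. The conclusion holds for every realization of $\xi$, not just in expectation, and no integrability of $C$ is needed; this is the sense in which the cancellation is exact.

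For the variance statement, note that the paired target is a measurable function of $(s,\xi)$ only through $E(s,\cdot,\xi)$. Therefore any moment of the target, including its conditional covariance given $s$, is determined by the law of $E(s,\cdot,\xi)$ alone. Two decompositions with the same $E$ but different $C$, of any dimension or variance, give identical target distributions. In particular, $\operatorname{Var}(C)$ does not appear in the variance of the centered target.

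The main subtlety is interpretive rather than computational, because $C$ and $E$ may be statistically dependent through $\xi$. I would point out that this dependence does no harm: the identity is pathwise, so $C$ has been removed before any expectation is taken. What cannot be claimed is that the target has lower variance than $E$ itself, since the centered $E$ can have larger or smaller variance than $E$. I would therefore phrase the conclusion only as independence from $C$, and defer quantitative variance comparisons to \cref{prop:paired-variance}.
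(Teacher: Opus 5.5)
Your proposal is correct and matches the paper's proof. The paper likewise applies $\mathcal C_\rho$ to the decomposition for a fixed realization $\xi$, uses $\sum_b\rho(b\mid s)=1$ to cancel the shared $C(s,\xi)$ sample-wise, and concludes that a quantity absent from the target cannot contribute to its variance. Your additional remarks on linearity, on pathwise rather than in-expectation cancellation, and on possible dependence between $C$ and $E$ are consistent with that argument and slightly more careful than it.
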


\begin{proof}
Apply the centering operator in \cref{eq:center} to the decomposition in
\cref{eq:noise-decomp}.  Since the same realization $C(s,\xi)$ appears in
every branch,
\begin{align*}
  \mathcal C_\rho\widehat F(s,a)
  &=
  C+E(s,a,\xi)
  -\sum_b\rho(b\mid s)\bigl(C+E(s,b,\xi)\bigr)\\
  &=
  C+E(s,a,\xi)
  -C\sum_b\rho(b\mid s)
  -\sum_b\rho(b\mid s)E(s,b,\xi)\\
  &=
  E(s,a,\xi)-\sum_b\rho(b\mid s)E(s,b,\xi),
\end{align*}
because $\sum_b\rho(b\mid s)=1$.  The final expression contains no occurrence
of $C$ for any realization of $\xi$, so cancellation is sample-wise rather
than only in expectation.  A random variable absent from the target cannot
contribute to its variance, which proves the final statement.
\end{proof}

\begin{remark}
The theorem is stronger than merely stating that common modes do not affect
the optimal action.  It says they disappear from each stochastic training
target before function approximation.  If branches use independent noise,
centering still removes their common expectation but not sample-level noise;
the common-random-number coupling is therefore an algorithmically important
variance reduction device.
\end{remark}

\begin{proposition}[Variance removed by paired branches]
\label{prop:paired-variance}
Consider two actions and scalar branch outcomes
$Y_a=C+E_a$ and $Y_b=C+E_b$, where the same random variable $C$ is reused
across branches.  Then
\begin{equation}
  \operatorname{Var}(Y_a-Y_b)
  =
  \operatorname{Var}(E_a-E_b).
  \label{eq:paired-variance}
\end{equation}
If instead the branches use independent copies $C_a,C_b$ that are independent
of $(E_a,E_b)$, then
\begin{equation}
  \operatorname{Var}\!\left[
    (C_a+E_a)-(C_b+E_b)
  \right]
  =
  2\operatorname{Var}(C)
  +
  \operatorname{Var}(E_a-E_b).
  \label{eq:independent-variance}
\end{equation}
\end{proposition}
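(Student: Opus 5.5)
The argument is a direct variance computation in each of the two coupling regimes, using only bilinearity of variance and covariance. We assume all random variables have finite second moments; otherwise the variances are undefined.

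\emph{Shared branch.} Since the same realization of $C$ appears in both branches, it cancels sample-wise:
\[
  Y_a-Y_b=(C+E_a)-(C+E_b)=E_a-E_b .
\]
This is the scalar, two-action special case of \cref{thm:cancellation}. Two random variables that agree pointwise have equal variances, which gives \cref{eq:paired-variance}. No independence assumption is needed here, so $C$ may be arbitrarily correlated with $E_a$ and $E_b$.

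\emph{Independent branches.} Regroup the difference as
\[
  (C_a+E_a)-(C_b+E_b)=(C_a-C_b)+(E_a-E_b),
\]
and expand its variance as
\[
  \operatorname{Var}(C_a-C_b)+\operatorname{Var}(E_a-E_b)+2\operatorname{Cov}(C_a-C_b,\,E_a-E_b).
\]
Two facts finish the computation.

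\begin{enumerate}
  \item The cross covariance vanishes. We read the hypothesis as saying that the pair $(C_a,C_b)$ is independent of $(E_a,E_b)$, and this is the point to state carefully. Under that reading $C_a-C_b$ is independent of $E_a-E_b$, so their covariance is zero.
  \item $\operatorname{Var}(C_a-C_b)=2\operatorname{Var}(C)$. Here $C_a$ and $C_b$ are independent copies with the law of $C$. Independence gives $\operatorname{Var}(C_a-C_b)=\operatorname{Var}(C_a)+\operatorname{Var}(C_b)$, and each term equals $\operatorname{Var}(C)$.
\end{enumerate}

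Substituting both facts yields \cref{eq:independent-variance}.

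\emph{Main subtlety.} The only delicate step is interpreting the independence hypothesis. It must be joint independence of $(C_a,C_b)$ from $(E_a,E_b)$, not merely each $C$ being independent of its own $E$. With only the weaker condition, the cross covariance would still contain the terms $-\operatorname{Cov}(C_a,E_b)$ and $-\operatorname{Cov}(C_b,E_a)$, which need not vanish.

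Comparing the two identities shows that common random numbers remove exactly $2\operatorname{Var}(C)\ge 0$ of target variance. This gap is unbounded as the common component grows.
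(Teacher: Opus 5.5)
Your proof is correct and matches the paper's argument step for step. The shared case uses sample-wise cancellation; in the independent case you split the difference into $(C_a-C_b)+(E_a-E_b)$, kill the cross covariance by joint independence of $(C_a,C_b)$ from $(E_a,E_b)$, and show $\operatorname{Var}(C_a-C_b)=2\operatorname{Var}(C)$. Your finite-second-moment assumption and your remark that per-branch independence alone would leave $-\operatorname{Cov}(C_a,E_b)-\operatorname{Cov}(C_b,E_a)$ are accurate refinements that the paper leaves implicit.
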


\begin{proof}
With shared noise,
\[
  Y_a-Y_b=(C+E_a)-(C+E_b)=E_a-E_b,
\]
so taking variances proves \cref{eq:paired-variance}.  With independent
copies, write the difference as
$D_C+D_E$, where $D_C=C_a-C_b$ and $D_E=E_a-E_b$.  The general sum formula
gives
\[
  \operatorname{Var}(D_C+D_E)
  =
  \operatorname{Var}(D_C)
  +\operatorname{Var}(D_E)
  +2\operatorname{Cov}(D_C,D_E).
\]
Independence of $(C_a,C_b)$ from $(E_a,E_b)$ makes the covariance term zero.
Moreover,
\begin{align*}
  \operatorname{Var}(D_C)
  &=
  \operatorname{Var}(C_a)
  +\operatorname{Var}(C_b)
  -2\operatorname{Cov}(C_a,C_b)\\
  &=
  \operatorname{Var}(C)+\operatorname{Var}(C)-0
  =
  2\operatorname{Var}(C),
\end{align*}
using identical distribution and independence of the two copies.  Substitution
gives \cref{eq:independent-variance}.
\end{proof}

\Cref{prop:paired-variance} gives a quantitative interpretation of
\cref{thm:cancellation}.  The gain is not merely that an unbiased estimator
has been centered: the entire variance contribution of a shared nuisance is
absent from every supervised action difference.  The conclusion weakens when
the nuisance is only approximately shared, in which case the residual is the
branch-to-branch mismatch in that nuisance.

\begin{theorem}[Action-effect rank]
\label{thm:rank}
For fixed $s$, place the effect vectors in the matrix
$K_s=[\kappa(s,a_1),\ldots,\kappa(s,a_{|\cA|})]\in\R^{d\times|\cA|}$.
Under uniform $\rho$,
\begin{equation}
  \rank(K_s)\leq |\cA|-1.
  \label{eq:rank-actions}
\end{equation}
If all pairwise action effects lie in an $r$-dimensional subspace
$U_s\subseteq\R^d$, then
\begin{equation}
  \rank(K_s)\leq\min\{r,|\cA|-1\}.
  \label{eq:rank-control}
\end{equation}
\end{theorem}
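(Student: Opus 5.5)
The plan is to view $K_s$ as a linear map and exhibit a nonzero vector in its kernel, which bounds the rank by $|\cA|-1$. Under uniform $\rho$, \cref{eq:kappa} gives $\sum_a \kappa(s,a)=0$, that is, $K_s\mathbf 1=0$ with $\mathbf 1\in\R^{|\cA|}$ the all-ones vector. Since $\mathbf 1\neq 0$, rank--nullity gives $\rank(K_s)=|\cA|-\dim\ker K_s\leq |\cA|-1$, which is \cref{eq:rank-actions}. The uniform assumption is not really needed: for any full-support $\rho$ the vector $(\rho(a\mid s))_a$ plays the same role. I would note this, but keep the statement as written.

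For \cref{eq:rank-control}, I would first show that every column $\kappa(s,a)$ lies in $U_s$. The hypothesis concerns the pairwise effects $F(s,a)-F(s,b)$, which by \cref{thm:gauge} (or directly from \cref{eq:center}) equal $\kappa(s,a)-\kappa(s,b)$. Expanding the centering gives
\[
\kappa(s,a)=F(s,a)-\sum_b\rho(b\mid s)F(s,b)=\sum_b\rho(b\mid s)\bigl(F(s,a)-F(s,b)\bigr).
\]
This is a convex combination of pairwise effects, so it lies in the subspace $U_s$. The column space of $K_s$ is therefore contained in $U_s$, so $\rank(K_s)\leq \dim U_s=r$. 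Combining this with the first bound gives the minimum.

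The only step requiring care is interpreting ``pairwise action effects''. I would take it to mean the differences of the projected futures $F^\pi_H(s,a)-F^\pi_H(s,b)$, or equivalently of the $\kappa$'s. I would also state explicitly that the convex-combination identity uses $\sum_b\rho(b\mid s)=1$. Everything else is elementary linear algebra, so I expect no real obstacle.
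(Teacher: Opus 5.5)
Your proposal is correct and follows the paper's proof essentially step for step: it uses $K_s\mathbf{1}=0$ with rank--nullity for the first bound, and it writes $\kappa(s,a)=\sum_b\rho(b\mid s)\bigl(F(s,a)-F(s,b)\bigr)$ so that every column lies in $U_s$ for the second. Your side remark is also right: any $\rho$ works, with $(\rho(a\mid s))_a$ as the nonzero kernel vector, and membership in $U_s$ needs only linearity, not convexity.
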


\begin{proof}
Under the uniform reference, the centering constraint is
\[
  \sum_{j=1}^{|\cA|}\kappa(s,a_j)=0,
\]
which in matrix notation is $K_s\1=0$.  The all-ones vector is nonzero, so the
null space of $K_s$ has dimension at least one.  Rank--nullity for a matrix
with $|\cA|$ columns then yields
$\rank(K_s)\leq|\cA|-1$.

For the second claim, fix any action $a_i$.  Its centered effect can be
written explicitly as
\[
  \kappa(s,a_i)
  =
  F(s,a_i)-\sum_j\rho(a_j\mid s)F(s,a_j)
  =
  \sum_j\rho(a_j\mid s)
  \bigl(F(s,a_i)-F(s,a_j)\bigr).
\]
Every vector in the final sum lies in $U_s$ by assumption.  Since $U_s$ is a
linear subspace, every column of $K_s$ lies in $U_s$.  Hence
$\operatorname{col}(K_s)\subseteq U_s$ and $\rank(K_s)\leq r$.  Combining
this bound with the first gives the stated minimum.
\end{proof}

\begin{remark}[Pointwise rank versus a shared decoder]
\Cref{thm:rank} is pointwise in $s$.  It does not imply that one fixed
$(|\cA|-1)$-dimensional output subspace contains the effects at every state:
the subspace $U_s$ may rotate with $s$, and
$\operatorname{span}(\bigcup_s U_s)$ can be much larger.  The implemented
linear decoder is shared across states, so its required rank is governed by
this global span.  This distinction is especially relevant for contact-rich
Walker dynamics.
\end{remark}

\begin{proposition}[Finite action-prototype error]
\label{prop:prototype-error}
Let $\widetilde{\cA}$ be a compact continuous action space and
$\cA_\epsilon\subset\widetilde{\cA}$ an $\epsilon$-cover.  If
$Q_r(s,\cdot)$ is $L_r$-Lipschitz, then
\begin{equation}
  0
  \leq
  \max_{a\in\widetilde{\cA}}Q_r(s,a)
  -
  \max_{\tilde a\in\cA_\epsilon}Q_r(s,\tilde a)
  \leq L_r\epsilon.
  \label{eq:prototype-error}
\end{equation}
\end{proposition}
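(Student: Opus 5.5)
The plan is a standard covering argument in two steps, one for each inequality in \cref{eq:prototype-error}. Throughout, fix $s$ and the reward $r$, and write $Q(a)=Q_r(s,a)$ for $a\in\widetilde{\cA}$.

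\emph{Existence of the maxima.} A Lipschitz function is continuous, so $Q$ is continuous on the compact set $\widetilde{\cA}$. Hence $\max_{a\in\widetilde{\cA}}Q(a)$ is attained at some $a^\star$. On the cover side, the maximum over $\cA_\epsilon$ must also be well defined. The finite prototype sets used by the paper are finite by construction. In general one may take the $\epsilon$-cover finite, since compactness guarantees a finite $\epsilon$-cover exists. If $\cA_\epsilon$ is allowed to be infinite, I would replace the max by a supremum; the argument below is unchanged.

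\emph{Lower bound.} Since $\cA_\epsilon\subseteq\widetilde{\cA}$, the maximum over the subset cannot exceed the maximum over the full set. The difference is therefore nonnegative.

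\emph{Upper bound.} By the definition of an $\epsilon$-cover, there is $\tilde a\in\cA_\epsilon$ with $\|a^\star-\tilde a\|\le\epsilon$, in the metric for which $Q$ is $L_r$-Lipschitz. Lipschitz continuity then gives
\[
  Q(a^\star)-Q(\tilde a)\le |Q(a^\star)-Q(\tilde a)|\le L_r\|a^\star-\tilde a\|\le L_r\epsilon .
\]
Because $\max_{\tilde a'\in\cA_\epsilon}Q(\tilde a')\ge Q(\tilde a)$, we get
\[
  \max_{\widetilde{\cA}}Q-\max_{\cA_\epsilon}Q\le Q(a^\star)-Q(\tilde a)\le L_r\epsilon .
\]

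There is no substantive obstacle. The only points needing care are attainment of the continuous maximum, which follows from compactness and continuity, and the requirement that the cover radius and the Lipschitz constant are measured in the same metric.
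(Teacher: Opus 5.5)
Your proof is correct and matches the paper's argument step for step: $a^\star$ exists by compactness and continuity, the lower bound follows from $\cA_\epsilon\subseteq\widetilde{\cA}$, and the upper bound uses a cover point within $\epsilon$ of $a^\star$ together with the Lipschitz property. Your extra remark that the maximum over $\cA_\epsilon$ is attained (the cover is finite, or else replace the max by a supremum) is a small point of care that the paper leaves implicit.
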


\begin{proof}
Compactness of $\widetilde{\cA}$ and continuity implied by Lipschitzness ensure
that a maximizer $a^\star$ exists.  Since
$\cA_\epsilon\subseteq\widetilde{\cA}$,
\[
  \max_{\tilde a\in\cA_\epsilon}Q_r(s,\tilde a)
  \leq
  Q_r(s,a^\star),
\]
which proves the lower bound.  By the covering property, choose
$a_\epsilon\in\cA_\epsilon$ such that
$\|a^\star-a_\epsilon\|\leq\epsilon$.  Then
\begin{align*}
  Q_r(s,a^\star)
  -\max_{\tilde a\in\cA_\epsilon}Q_r(s,\tilde a)
  &\leq Q_r(s,a^\star)-Q_r(s,a_\epsilon)\\
  &\leq
  |Q_r(s,a^\star)-Q_r(s,a_\epsilon)|\\
  &\leq L_r\|a^\star-a_\epsilon\|
  \leq L_r\epsilon.
\end{align*}
The first inequality uses the fact that the prototype maximum is at least the
score of the particular prototype $a_\epsilon$.
\end{proof}

This discretization error is separate from quotient estimation error.  The
experiments above evaluate action ranking within the prototype set and do not
estimate $L_r\epsilon$ relative to the full continuous action space.

\begin{theorem}[Decision regret from measure error]
\label{thm:regret}
Let $\mathcal R$ be a class of bounded rewards and define
\[
  \|\widehat K-K\|_{\mathcal R^\star}
  =
  \sup_{r\in\mathcal R}
  \left|\int r\,\mathrm d(\widehat K-K)\right|.
\]
If
$\|\widehat K_{s,a}-K_{s,a}\|_{\mathcal R^\star}\leq\epsilon$
for every $a$, then for any $r\in\mathcal R$, the action
$\widehat a\in\argmax_a\int r\,\mathrm d\widehat K_{s,a}$ satisfies
\begin{equation}
  Q_r(s,a^\star)-Q_r(s,\widehat a)\leq2\epsilon,
  \label{eq:regret-bound}
\end{equation}
where $a^\star\in\argmax_a Q_r(s,a)$.
\end{theorem}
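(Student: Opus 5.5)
The argument is the usual two-sided perturbation bound, carried out on centered values; \cref{thm:duality} is the bridge between the model $\widehat K$ and the true values. Fix $s$ and $r\in\mathcal R$. Define the true centered value $A_r(s,a)=\int r\,\mathrm dK_{s,a}$ and its model estimate $\widehat A_r(s,a)=\int r\,\mathrm d\widehat K_{s,a}$. By \cref{thm:duality},
\[
  A_r(s,a)=Q_r(s,a)-\sum_b\rho(b\mid s)Q_r(s,b).
\]
The subtracted term does not depend on $a$. Hence, for any two actions,
\[
  Q_r(s,a^\star)-Q_r(s,\widehat a)=A_r(s,a^\star)-A_r(s,\widehat a).
\]
In particular, $a^\star$ also maximizes $A_r(s,\cdot)$, as in the paragraph preceding \cref{thm:duality}. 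This reduces the regret to a difference of true centered values.

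Next, the error hypothesis gives a pointwise bound. Since $r\in\mathcal R$, linearity of the integral in the signed measure gives, for every $a$,
\[
  |\widehat A_r(s,a)-A_r(s,a)|=\left|\int r\,\mathrm d(\widehat K_{s,a}-K_{s,a})\right|\le\|\widehat K_{s,a}-K_{s,a}\|_{\mathcal R^\star}\le\epsilon.
\]

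Then I split the regret into three terms:
\[
  A_r(s,a^\star)-A_r(s,\widehat a)
  =\bigl[A_r(s,a^\star)-\widehat A_r(s,a^\star)\bigr]
  +\bigl[\widehat A_r(s,a^\star)-\widehat A_r(s,\widehat a)\bigr]
  +\bigl[\widehat A_r(s,\widehat a)-A_r(s,\widehat a)\bigr].
\]
The first and third brackets are each at most $\epsilon$ by the pointwise bound. The middle bracket is $\le 0$ because $\widehat a$ maximizes $\widehat A_r(s,\cdot)$. Summing gives $2\epsilon$.

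There is no real obstacle here. The only care needed is that the integrals against $\widehat K$ and $K$ are well defined; this holds because $r$ is bounded and the measures are finite signed measures. It is also worth noting that the bound uses only that the hypothesis holds for the two actions $a^\star$ and $\widehat a$. Finally, nothing requires $\widehat K$ to be centered: any action-independent offset in $\widehat K$ leaves $\widehat a$ unchanged, consistent with \cref{thm:gauge}.
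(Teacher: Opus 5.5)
Your proposal is correct and matches the paper's proof: you use \cref{thm:duality} to rewrite the regret as a difference of true centered scores, bound each score error by $\epsilon$ through the dual-norm hypothesis, and apply the same three-term split, whose middle term is nonpositive because $\widehat a$ maximizes the estimated score. Your added observation that the hypothesis is needed only at the two actions $a^\star$ and $\widehat a$ is accurate, though the paper does not state it.
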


\begin{proof}
By \cref{thm:duality}, subtracting the reference value from $Q_r$ does not
change either action ordering or regret.  Define the true and estimated
centered scores
\[
  A(a)=\int r\,\mathrm dK_{s,a},
  \qquad
  \widehat A(a)=\int r\,\mathrm d\widehat K_{s,a}.
\]
The assumption implies
$|\widehat A(a)-A(a)|\leq\epsilon$ for every action.  Since
$\widehat a$ maximizes $\widehat A$, we have
$\widehat A(a^\star)-\widehat A(\widehat a)\leq0$.  Add and subtract these
two estimated scores:
\begin{align*}
  Q_r(s,a^\star)-Q_r(s,\widehat a)
  &=
  \int r\,\mathrm d(K-\widehat K)_{s,a^\star}
  +\bigl[
    \int r\,\mathrm d\widehat K_{s,a^\star}
    -\int r\,\mathrm d\widehat K_{s,\widehat a}
  \bigr]\\
  &\quad+
  \int r\,\mathrm d(\widehat K-K)_{s,\widehat a}.
\end{align*}
The first and third terms are each at most $\epsilon$ in absolute value, while
the bracketed term is nonpositive.  Therefore the whole expression is at most
$\epsilon+0+\epsilon=2\epsilon$.
\end{proof}

\begin{corollary}[Feature-level regret]
\label{cor:feature-regret}
If $\|w\|_2\leq W$ and
$\|\widehat\kappa(s,a)-\kappa(s,a)\|_2\leq\delta$ for every $a$, choosing
$\widehat a=\argmax_a w^\top\widehat\kappa(s,a)$ incurs regret at most
$2W\delta$ for reward $r_w=w^\top\phi$.
\end{corollary}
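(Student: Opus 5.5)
The plan is to deduce the corollary from \cref{thm:regret} by choosing the reward class to be a norm ball of linear feature rewards, or, more directly, to repeat its add-and-subtract argument with feature-level scores. I prefer the direct route, because it avoids having to state that $\widehat\kappa$ is the projection of some signed measure $\widehat K$. For the reward $r_w=w^\top\phi$, \cref{thm:duality} (equivalently \cref{eq:q-readout}) gives the true centered score
\[
  A_w(a)=Q^\pi_{w,H}(s,a)-\sum_b\rho(b\mid s)Q^\pi_{w,H}(s,b)=w^\top\kappa(s,a).
\]
The estimated score is $\widehat A_w(a)=w^\top\widehat\kappa(s,a)$. Subtracting the action-independent baseline leaves both the ordering and the regret unchanged, so $Q_{r_w}(s,a^\star)-Q_{r_w}(s,\widehat a)=A_w(a^\star)-A_w(\widehat a)$.

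First, I will bound the uniform score error. Cauchy--Schwarz gives, for every $a$,
\[
  |\widehat A_w(a)-A_w(a)|=|w^\top(\widehat\kappa(s,a)-\kappa(s,a))|\le\|w\|_2\,\|\widehat\kappa(s,a)-\kappa(s,a)\|_2\le W\delta.
\]
This plays the role of $\epsilon$ in \cref{thm:regret}.

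Second, I will decompose the regret into three terms:
\[
  A_w(a^\star)-A_w(\widehat a)=\bigl[A_w(a^\star)-\widehat A_w(a^\star)\bigr]+\bigl[\widehat A_w(a^\star)-\widehat A_w(\widehat a)\bigr]+\bigl[\widehat A_w(\widehat a)-A_w(\widehat a)\bigr].
\]
The middle bracket is nonpositive because $\widehat a$ maximizes $\widehat A_w$. Each outer bracket is at most $W\delta$ by the first step, so the total is at most $2W\delta$.

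If I instead invoke \cref{thm:regret} verbatim, I would take $\mathcal R=\{r_v:\|v\|_2\le W\}$ and compute the dual norm $\sup_{\|v\|\le W}|v^\top(\widehat\kappa-\kappa)|=W\|\widehat\kappa-\kappa\|_2$. The only subtle point in the whole argument is that the hypothesis is stated on $\widehat\kappa$, not on a measure, and the direct computation above is precisely what sidesteps it. There is no real obstacle here; the work is in making sure the baseline cancellation is justified, and \cref{thm:duality} supplies that.
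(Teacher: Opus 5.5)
Your proposal is correct and follows the paper's proof essentially step for step. Cauchy--Schwarz gives the per-action score error $W\delta$, Theorem~\ref{thm:duality} identifies $w^\top\kappa(s,a)$ as the centered value of $r_w$, and the three-term add-and-subtract decomposition of Theorem~\ref{thm:regret} is rerun directly on feature scores, as the paper also does, so $W\delta$ is charged once at $a^\star$ and once at $\widehat a$ while the middle term is nonpositive.
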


\begin{proof}
For each action, the feature-score error satisfies
\[
  |w^\top\widehat\kappa(s,a)-w^\top\kappa(s,a)|
  =
  |w^\top(\widehat\kappa-\kappa)(s,a)|
  \leq
  \|w\|_2\|\widehat\kappa-\kappa\|_2
  \leq W\delta,
\]
where the first inequality is Cauchy--Schwarz.  By
\cref{thm:duality}, these true feature scores are the centered action values
for $r_w$.  Repeating the optimal-action decomposition in
\cref{thm:regret}, the error $W\delta$ is paid once for $a^\star$ and once for
$\widehat a$, while estimated optimality contributes a nonpositive term.
The regret is therefore at most $2W\delta$.
\end{proof}

\begin{proposition}[Reward projection error]
\label{prop:reward-error}
Let $r$ be bounded and suppose
$\sup_{z\in\cZ}|r(z)-w^\top\phi(z)|\leq\epsilon_r$.  Then
\begin{equation}
  \left|
    \int r\,\mathrm dK^\pi_{H,s,a}
    -
    w^\top\kappa^\pi_H(s,a)
  \right|
  \leq 2m_H\epsilon_r.
  \label{eq:reward-error}
\end{equation}
In the infinite-horizon discounted case, the bound is
$2\epsilon_r/(1-\gamma)$.
\end{proposition}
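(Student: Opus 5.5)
The plan is to rewrite the feature score as an integral against the effect measure, so that the whole error becomes a single integral of the residual reward. By \cref{thm:duality} applied coordinatewise (its final sentence), $w^\top\kappa^\pi_H(s,a)=\int w^\top\phi\,\mathrm dK^\pi_{H,s,a}$. This requires $\phi$ to be integrable against $M^\pi_{H,s,b}$ for every $b$, which holds by the standing assumption on $\phi$. Setting $e(z)=r(z)-w^\top\phi(z)$, which satisfies $\sup_z|e(z)|\le\epsilon_r$, linearity of the integral gives
\[
  \int r\,\mathrm dK^\pi_{H,s,a}-w^\top\kappa^\pi_H(s,a)=\int e\,\mathrm dK^\pi_{H,s,a}.
\]
The claim therefore reduces to bounding $\bigl|\int e\,\mathrm dK\bigr|$ by $\epsilon_r$ times the total variation of $K^\pi_{H,s,a}$.

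To bound the total variation, I will use $K=M^\pi_{H,s,a}-\overline M^\pi_{H,s}$. Both terms are positive measures with total mass $m_H$; for $\overline M$ this is the computation in the proof of \cref{prop:zero-mass}. Hence $|K|(\cZ)\le M^\pi_{H,s,a}(\cZ)+\overline M^\pi_{H,s}(\cZ)=2m_H$. Equivalently, I can split the integral as $\int e\,\mathrm dM^\pi_{H,s,a}-\int e\,\mathrm d\overline M^\pi_{H,s}$ and bound each piece by $\epsilon_r m_H$ using positivity, which avoids the Jordan decomposition altogether. Either route gives the bound $2m_H\epsilon_r$.

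For the infinite-horizon case, the same argument goes through with $m_H$ replaced by the total mass $(1-\gamma)^{-1}$, giving $2\epsilon_r/(1-\gamma)$. The only technical point is whether $w^\top\phi$ is integrable when it is not bounded. Because $r$ is bounded and $|r-w^\top\phi|\le\epsilon_r$, the function $w^\top\phi$ is itself bounded, so every integral above is finite. I do not expect a real obstacle here.

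If a sharper constant were wanted, one could use $K(\cZ)=0$ to replace $e$ by $e-c$ for any constant $c$. This would give $m_H\,(\sup e-\inf e)\le 2m_H\epsilon_r$, but the stated bound does not need this refinement.
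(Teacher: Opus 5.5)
Your proposal is correct and follows essentially the same route as the paper. You write $w^\top\kappa^\pi_H(s,a)=\int w^\top\phi\,\mathrm dK^\pi_{H,s,a}$, reduce to the residual integral $\int(r-w^\top\phi)\,\mathrm dK^\pi_{H,s,a}$, and bound it by $\epsilon_r\|K^\pi_{H,s,a}\|_{\mathrm{TV}}\le 2m_H\epsilon_r$ using the triangle inequality on the two positive measures of mass $m_H$; your split-integral variant and the oscillation refinement via $K(\cZ)=0$ are also valid, though not needed.
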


\begin{proof}
Both $M^\pi_{H,s,a}$ and $\overline M^\pi_{H,s}$ are positive measures of
mass $m_H$.  The triangle inequality for total variation therefore gives
\[
  \|K^\pi_{H,s,a}\|_{\mathrm{TV}}
  =
  \|M^\pi_{H,s,a}-\overline M^\pi_{H,s}\|_{\mathrm{TV}}
  \leq
  \|M^\pi_{H,s,a}\|_{\mathrm{TV}}
  +\|\overline M^\pi_{H,s}\|_{\mathrm{TV}}
  =
  2m_H.
\]
By \cref{eq:feature-projection},
$w^\top\kappa^\pi_H(s,a)=\int w^\top\phi\,\mathrm dK^\pi_{H,s,a}$.
Subtracting this identity from the reward integral and applying the
total-variation inequality yields
\[
  \left|
    \int r\,\mathrm dK^\pi_{H,s,a}
    -w^\top\kappa^\pi_H(s,a)
  \right|
  =
  \left|\int(r-w^\top\phi)\,\mathrm dK^\pi_{H,s,a}\right|
  \leq
  \|r-w^\top\phi\|_\infty
  \|K^\pi_{H,s,a}\|_{\mathrm{TV}}
  \leq
  2m_H\epsilon_r.
\]
For an infinite discounted horizon, the mass of each positive successor
measure is the geometric sum
$\sum_{k=0}^{\infty}\gamma^k=(1-\gamma)^{-1}$.  Substituting this mass for
$m_H$ gives the stated infinite-horizon bound.
\end{proof}

\begin{corollary}[Combined estimation and reward-approximation regret]
\label{cor:combined-regret}
Assume $\|w\|_2\leq W$,
$\|\widehat\kappa(s,a)-\kappa(s,a)\|_2\leq\delta$ for every action, and
$\|r-w^\top\phi\|_\infty\leq\epsilon_r$.  If
$\widehat a\in\argmax_a w^\top\widehat\kappa(s,a)$, then
\begin{equation}
  Q_r(s,a^\star)-Q_r(s,\widehat a)
  \leq
  2W\delta+4m_H\epsilon_r.
  \label{eq:combined-regret}
\end{equation}
\end{corollary}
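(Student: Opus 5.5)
The plan is to reduce the statement to a uniform per-action error bound on centered scores and then reuse the three-term decomposition from the proof of \cref{thm:regret}. Define the true centered score $A(a)=\int r\,\mathrm dK^\pi_{H,s,a}$ and the estimated score $\widehat A(a)=w^\top\widehat\kappa(s,a)$. By \cref{thm:duality}, $A(a)$ equals $Q_r(s,a)$ minus the action-independent reference value $\bar Q^\pi_{r,\rho,H}(s)$. Therefore
\[
  Q_r(s,a^\star)-Q_r(s,\widehat a)=A(a^\star)-A(\widehat a),
\]
and it suffices to bound this difference.

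Next, for each action $a$, we split the score error by the triangle inequality:
\[
  |\widehat A(a)-A(a)|
  \leq
  |w^\top\widehat\kappa(s,a)-w^\top\kappa^\pi_H(s,a)|
  +
  \left|w^\top\kappa^\pi_H(s,a)-\int r\,\mathrm dK^\pi_{H,s,a}\right|.
\]
The first term is at most $W\delta$ by Cauchy--Schwarz, exactly as in \cref{cor:feature-regret}. The second term is at most $2m_H\epsilon_r$ by \cref{prop:reward-error}, which rests on the total-variation bound $\|K\|_{\mathrm{TV}}\leq 2m_H$. Hence every action obeys
\[
  |\widehat A(a)-A(a)|\leq\eta:=W\delta+2m_H\epsilon_r .
\]

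Finally, we write
\[
  A(a^\star)-A(\widehat a)
  =
  [A(a^\star)-\widehat A(a^\star)]
  +[\widehat A(a^\star)-\widehat A(\widehat a)]
  +[\widehat A(\widehat a)-A(\widehat a)].
\]
The middle bracket is nonpositive because $\widehat a$ maximizes $w^\top\widehat\kappa(s,\cdot)$. The two outer brackets are each at most $\eta$. This gives a total of $2\eta=2W\delta+4m_H\epsilon_r$.

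The only real care needed is that the surrogate linear reward $w^\top\phi$ is used only for selection, while regret is measured against the true reward $r$. The reward-approximation error therefore appears separately for both $a^\star$ and $\widehat a$, which produces the factor $4$. There is no substantive obstacle beyond checking that \cref{prop:reward-error} applies pointwise to each action with the same $w$.
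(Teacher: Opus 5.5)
Your proposal is correct and follows essentially the same route as the paper's proof. The paper likewise bounds each centered-score error by $W\delta+2m_H\epsilon_r$, inserting $w^\top\kappa(s,a)$ and using Cauchy--Schwarz together with \cref{prop:reward-error}, and then applies the same three-term decomposition, with the middle term nonpositive, to obtain $2W\delta+4m_H\epsilon_r$.
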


\begin{proof}
Define the true centered reward score
$A_r(a)=\int r\,\mathrm dK_{s,a}$ and the learned linear score
$\widehat A_w(a)=w^\top\widehat\kappa(s,a)$.  For every action, insert the
exact feature score $w^\top\kappa(s,a)$ and apply the triangle inequality:
\begin{align*}
  |A_r(a)-\widehat A_w(a)|
  &\leq
  \left|\int(r-w^\top\phi)\,\mathrm dK_{s,a}\right|
  +
  \left|w^\top(\kappa-\widehat\kappa)(s,a)\right|\\
  &\leq 2m_H\epsilon_r+W\delta,
\end{align*}
where \cref{prop:reward-error} bounds the first term and Cauchy--Schwarz bounds
the second.  Let
$\eta=2m_H\epsilon_r+W\delta$.  Because $\widehat a$ maximizes
$\widehat A_w$,
\[
  \widehat A_w(a^\star)-\widehat A_w(\widehat a)\leq0.
\]
Consequently,
\begin{align*}
  Q_r(s,a^\star)-Q_r(s,\widehat a)
  &=
  A_r(a^\star)-A_r(\widehat a)\\
  &=
  [A_r(a^\star)-\widehat A_w(a^\star)]
  +[\widehat A_w(a^\star)-\widehat A_w(\widehat a)]\\
  &\quad+
  [\widehat A_w(\widehat a)-A_r(\widehat a)]\\
  &\leq \eta+0+\eta
  =
  2W\delta+4m_H\epsilon_r.
\end{align*}
The first equality again uses the fact that the reference value is common to
all actions.
\end{proof}

\paragraph{Interpretation of the bound.}
Equation~\eqref{eq:combined-regret} separates two failure modes.  The term
$W\delta$ is an estimation error: even a reward exactly linear in $\phi$ is
misranked when the learned quotient is inaccurate.  The term
$2m_H\epsilon_r$ is a representation error: even a perfect feature quotient
cannot exactly answer a reward outside the span of $\phi$.  Increasing the
feature dimension may reduce $\epsilon_r$ but can increase statistical error
$\delta$ and weaken the low-rank advantage.  The measure-level model
corresponds conceptually to eliminating this finite-feature projection error,
but estimating a rich signed measure is itself more demanding.

\begin{proposition}[Dual recovery]
\label{prop:dual-recovery}
Let $w_1,\ldots,w_d$ form a basis of $\R^d$.  Exact access to the centered
values $A_{w_j,\rho}(s,a)=w_j^\top\kappa(s,a)$ for all $j$ uniquely determines
$\kappa(s,a)$.  Hence a sufficiently expressive task-conditioned value model
trained on a spanning reward family can implicitly recover the same decision
object as \CQM{}.
\end{proposition}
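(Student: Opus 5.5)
The argument reduces to invertibility of a linear map. Fix $(s,a)$ and stack the basis vectors as columns of $W=[w_1,\ldots,w_d]\in\R^{d\times d}$. The $d$ centered values are then the single vector
\[
  v(s,a)=\bigl(A_{w_1,\rho}(s,a),\ldots,A_{w_d,\rho}(s,a)\bigr)^\top=W^\top\kappa(s,a).
\]
This uses the readout identity in \cref{eq:q-readout}, which is justified at the measure level by \cref{thm:duality}. The task is therefore to show that $\kappa(s,a)$ is the unique vector mapped to $v(s,a)$ by $W^\top$.

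First, because $w_1,\ldots,w_d$ form a basis, $W$ has full rank, and so does $W^\top$. Hence $\kappa(s,a)=(W^\top)^{-1}v(s,a)$. For uniqueness, suppose some $\kappa'$ satisfies $w_j^\top\kappa'=w_j^\top\kappa(s,a)$ for all $j$. Then $\kappa'-\kappa(s,a)$ is orthogonal to every basis vector, hence to all of $\R^d$, so it is zero. Since $(s,a)$ is arbitrary, the whole function $\kappa$ is determined pointwise. No centering constraint has to be imposed separately: it is inherited automatically, because the recovered vector equals $\kappa$, which already satisfies \cref{eq:kappa}.

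For the ``hence'' clause, I would argue as follows. Take a task-conditioned model $g(s,a,w)$ that is exact on the spanning family up to an action-independent offset, after centering over actions. Then $\mathcal C_\rho g(\cdot,\cdot,w_j)=A_{w_j,\rho}$. Applying the inversion above gives $(W^\top)^{-1}[\mathcal C_\rho g(s,a,w_j)]_{j}=\kappa(s,a)$, which is the \CQM{} target. Linearity in $w$ then lets this recovered object answer every $w\in\R^d$.

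I do not see a genuine obstacle. The only care needed is in the informal ``hence'' statement: exactness of the value model must hold on the centered values, or on pairwise differences via \cref{thm:gauge}, rather than on absolute values, and ``implicitly recovers'' should be read as existence of this fixed linear decoder $(W^\top)^{-1}$.
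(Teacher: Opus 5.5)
Your proposal is correct and follows the paper's argument: the paper stacks the $w_j^\top$ as the rows of an invertible matrix $W$ so that $v(s,a)=W\kappa(s,a)$ and then inverts, which is your column-stacked $W^\top$ formulation. Your remark that the value model only needs to be exact after centering, for the ``hence'' clause to go through, makes explicit a point the paper leaves implicit.
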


\begin{proof}
Form the square matrix
$W=[w_1,\ldots,w_d]^\top$, whose $j$th row is $w_j^\top$.  Since the vectors
$w_1,\ldots,w_d$ form a basis, the rows are linearly independent; therefore
$W$ is invertible.  Stack the observed centered values into
\[
  v(s,a)
  =
  [A_{w_1,\rho}(s,a),\ldots,A_{w_d,\rho}(s,a)]^\top.
\]
By the assumed readout identity, the $j$th coordinate of this vector is
$w_j^\top\kappa(s,a)$, so all coordinates together satisfy
$v(s,a)=W\kappa(s,a)$.  Left-multiplication by $W^{-1}$ gives the unique
solution
\[
  \kappa(s,a)=W^{-1}v(s,a).
\]
Thus no two distinct effect vectors can agree on every query from the spanning
reward family.
\end{proof}

\Cref{prop:dual-recovery} clarifies the intended comparison with universal
value function approximators: \CQM{} does not claim that reward-aware critics
cannot represent action effects.  It learns the primal effect once without
reward labels; a task-conditioned critic can learn the dual map when given a
sufficiently rich set of supervised reward queries.

\section{Limitations}

The empirical evidence remains controlled despite using physics-based
environments.  We use state observations, append a simulated
action-independent process, and evaluate random linear rewards rather than the
native DM Control objectives.  The results establish that quotient targeting
can improve short-horizon action ranking in these four mixed-state domains;
they do not establish native-task control performance, pixel-level robustness,
or gains in physical robotics, multi-agent systems, or real-world data.

The current training procedure assumes synchronized counterfactual branches:
the environment can be reset to the same state and rolled out with shared
random numbers and shared continuation actions.  This is natural in a
simulator but generally unavailable in a physical system or a fixed
observational dataset.  A learned generative model could provide approximate
branches, but model bias may then contaminate the quotient.  The
ordinary-trajectory density-ratio form in \cref{eq:hindsight-ratio} requires
absolute continuity, sufficient action coverage, and a reliable
outcome-conditioned classifier; it is not evaluated here.  Establishing its
behavior in large replay datasets remains open.

The measure-level quotient is sufficient for bounded measurable rewards, but
the implemented finite-dimensional projection is exact only for rewards
linear in the selected outcome features.  General rewards incur the
approximation error in \cref{prop:reward-error}.  Expanding the feature map
may reduce this error while increasing the action-effect dimension and
weakening the capacity advantage.  Similarly, the quotient is defined
relative to a continuation policy.  A model trained under one policy need not
remain accurate after a large policy shift; iterating model and policy
learning requires further analysis.

The current architecture enumerates a finite action set and centers over all
actions.  Continuous-action extensions require an integral under a reference
distribution or a learned action sampler, introducing approximation error.
The paired data cost also scales linearly with the number of discrete
actions.  In the present experiments, the three or five action prototypes are
only a coarse subset of each environment's continuous control space.

Only five seeds are reported.  Although the separation from absolute models
is large on Cartpole and Reacher, comparisons among \CQM{} and reward-aware
critics often have overlapping confidence intervals.  Walker shows that the
fixed rank-eight bottleneck can fail to recover the quotient as native state
dimension and dynamical complexity increase.  The 32 training reward
directions may also make held-out interpolation favorable for universal value
models.  Experiments varying reward supervision, effect rank, common-mode
dimension, branch coupling, horizon, and continuation policy are needed,
together with stronger probabilistic and recurrent world models.

Finally, \CQM{} intentionally omits action-independent predictive
information.  That information may be useful for objectives other than the
current decision query, including anomaly detection, representation
reuse for perception, safety monitoring, or planning under a changed action
set.  The quotient should therefore be viewed as a task-family-specific
decision model, not as a universal replacement for generative world models.

\section{Conclusion}

We proposed Counterfactual Quotient Models as a two-level representation of
action-conditioned futures.  The general object is a zero-total-mass signed
successor measure that retains what choices change and removes what all
choices share.  It is the coarsest exact object for all bounded-reward action
comparisons.  A finite feature projection yields the practical low-rank model
and exposes explicit guarantees for reward approximation, effect rank, and
decision regret.  A paired-rollout estimator cancels common random futures
before approximation, while a density-ratio identity gives a route to
ordinary trajectories.

Across four DM Control physics domains with high-dimensional common dynamics,
this change of learned object produces a large gap over equally sized
absolute world and successor-feature models on Cartpole and Reacher, a
smaller but consistent advantage on Cheetah, and only a narrow gain on Walker.
\CQM{} also competes with reward-aware critics despite receiving no reward
direction in its representation loss.  These results do not establish broad
control superiority, but they demonstrate the central point under a clearly
specified query family: an agent need not model everything that will happen
in order to model the action-dependent information needed for choice.
Scaling the quotient to ordinary replay data, richer feature maps, continuous
actions, and native closed-loop objectives is the next step.
\bibliography{iclr2027_conference}
\bibliographystyle{iclr2027_conference}

\end{document}